\DeclareMathOperator*{\prog}{prog}
\DeclareMathOperator*{\argmax}{argmax}
\DeclareMathOperator*{\argmin}{argmin}
\renewcommand{\P}{\mathcal{P}}
\renewcommand{\iff}{\Leftrightarrow}
\newcommand{\A}{\mathcal{R}}
\newcommand{\until}{\mathsf{U}}
\newcommand{\true}{\top}
\newcommand{\false}{\bot}
\newtheorem{theorem}{Theorem}
\newtheorem{define}{Definition}
\title{Lifelong Reinforcement Learning with Temporal Logic Formulas\\ and Reward Machines}
\author{
    Xuejing Zheng
}
\title{My Publication Title --- Single Author}
\author {
    Author Name
}
\title{Lifelong Reinforcement Learning with Temporal Logic Formulas\\ and Reward Machines}
\author {
    % Authors
    Xuejing Zheng\textsuperscript{\rm 1},
    Chao Yu\textsuperscript{\rm 1*},
    Chen Chen\textsuperscript{\rm 2},
    Jianye Hao\textsuperscript{\rm 2},
    Hankz Hankui Zhuo\textsuperscript{\rm 1}
}
\begin{document}

\maketitle

\begin{abstract}
% Temporal logic formulas have been leveraged to specify tasks with high-level events from the environment, while \textit{Reward Machines} (RM)--a type of finite state machines--can be used to express the structural reward functions of tasks for more efficient learning. 
%  use them for lifelong RL, core idea of LSRM
Continuously learning new tasks using high-level ideas or knowledge is a key capability of humans. In this paper, we propose \textit{Lifelong reinforcement learning with Sequential linear temporal logic formulas and Reward Machines} (LSRM), which enables an agent to leverage previously learned knowledge to fasten learning of logically specified tasks. For the sake of more flexible specification of tasks, we first introduce \textit{Sequential Linear Temporal Logic} (SLTL), which is a supplement to the existing \textit{Linear Temporal Logic} (LTL) formal language. We then utilize \textit{Reward Machines} (RM) to exploit structural reward functions for tasks encoded with high-level events, and propose automatic extension of RM and efficient knowledge transfer over tasks for continuous learning in lifetime.
%Building upon the RM, various Q-value composition methods are then proposed to transfer knowledge from the learned Q-functions to the target tasks. 
Experimental results show that LSRM outperforms the methods that learn the target tasks from scratch by taking advantage of the task decomposition using SLTL and knowledge transfer over RM during the lifelong learning process.
% the temporal logic formulas xxxxx, so that the agent can exploit task modularity by decomposing formulas to transfer knowledge from previous learned tasks over its lifetime.

% We then propose \textit{Lifelong learning with SLTL formulas and RM} (LSRM) by maintaining a memory over lifetime. The memory includes an RM, a set of previous learned tasks and a set of Q-functions. 
% When faced with new tasks, the memory is updated by automatically extending the reward machine and transferring Q-functions of learned tasks to new tasks. Evaluation in two benchmark shows that our learning approach improves the sample efficiency by a large margin compared with direct learning methods.
\end{abstract}

\section{Introduction}
% background

%Lifelong machine learning (Thrun 1996) examines learning multiple tasks in sequence, with an emphasis on how previous knowledge of different tasks can be used to improve the training time and learning of current tasks. The hope is that, if sequential learning can be repeated indefinitely, then a system can continue to learn over the course of its lifetime. A continual learning agent is thought to be an important step toward general artificial intelligence.

There are at least two significant abilities of human intelligence: (i) storing learned skills in memory over lifetime and leveraging them when encountering new tasks; and (ii) utilizing high-level ideas or knowledge for more efficient reasoning and learning. These abilities enable humans to adapt quickly in environments where tasks and experiences change over time. 
% However, traditional \textit{Reinforcement Learning} (RL) \cite{sutton2018reinforcement} lacks such abilities, which makes it inefficient to learn from scratch without high-level knowledge.
% prior work (motivation)
%effect
\textit{Lifelong Reinforcement Learning} (LRL) \cite{abel2018policy,brunskill2014pac,garcia2019meta} formalizes the problem of building taskable agents by exploiting knowledge gained in previous tasks to improve performance in new but related tasks. Solving the LRL problem is an essential step toward general artificial intelligence as it allows agents to continuously adapt to changes in the environment with minimal human intervention, which is a key feature of human learning.

There has recently been a surge of interest in methods for achieving efficient LRL, utilizing techniques such as network consolidation \cite{schwarz2018progress} or freezing \cite{rusu2016progressive}, rehearsal via experience replay \cite{isele2018selective,rolnick2018experience}, and value-function/policy initialization \cite{abel2018policy}. Remarkably, the line of these works has mainly focused on the continuous learning settings, where a series of related tasks are drawn from a task distribution. Attention is restricted to subclasses of MDPs by making structural assumptions about which MDP components (rewards or transition probabilities) may change in support of the generation of tasks. While this kind of assumptions is reasonable in most real life situations, there are also  scenarios when either task specification is non-Markovian and thus difficult to be expressed analytically as a reward function, or the sequential tasks cannot be generated from an underlying distribution when expressed logically using formal languages \cite{linz2006introduction,pnueli1977temporal}. For instance, consider a scenario when an agent has learned the task of ``\emph{delivering coffee and mail to office}''. When facing a new task of ``\emph{delivering coffee or mail to office}'', it is unclear how existing LRL methods would model these tasks as the same distribution of MDP and enable efficient transfer learning among these tasks. This limitation contradicts the human ability of compositional learning using high-level ideas or knowledge, \emph{i.e.,} understanding novel situations by combining and reasoning over already known primitives.

%In order to specify these tasks encoded with high-level events of the environment, \textit{Linear Temporal Logic} (LTL) is utilized, by defining the successful and unsuccessful executions of a task\cite{li2017reinforcement,toro2018teaching}. \textit{Reward Machines} (RM) \cite{icarte2018using}--a type of finite states machine--are then proposed to express the structural reward functions of tasks, along with various learning algorithms such as \textit{Q-learning for Reward Machines} (QRM) to learn such tasks. QRM not only converges to an optimal policy in the tabular case, but also outperforms Hierarchical Reinforcement Learning (HRL) methods \cite{icarte2018using,toro2020reward}.
% , thereby being widely used in RL \cite{camacho2020disentangled,neary2020reward,icarte2019learning}.  

%Compared to existing skill composition methods, we are able to learn and compose logically complex tasks that would otherwise be difficult to analytically expressed as a reward function.

In this paper, we investigate LRL problems when the series of tasks do not necessarily share the same MDP structure, but instead are specified with high-level events using \textit{Linear Temporal Logic} (LTL) \cite{li2017reinforcement,toro2018teaching}. The basic intuition is to exploit task modularity and decomposition with higher abstraction and succinctness to facilitate transfer learning in target tasks. We first introduce the \textit{Sequential Linear Temporal Logic} (SLTL), which is a supplement to LTL by adding a new operator \textit{``then''}, and also prove the operator laws of SLTL to provide more flexible and rich specification of a task. In order to enable more efficient task learning, \textit{Reward Machines} (RM) \cite{icarte2018using} are utilized to express structural reward functions for tasks encoded with high-level events. Synthesizing the merits of task modularity using SLTL and policy learning over RM, we propose the \textit{Lifelong reinforcement learning with SLTL and RM} (LSRM) method, which stores and leverages high-level knowledge in a memory for more efficient lifelong learning of logically specified tasks. The memory contains an RM, which can be automatically updated when facing a set of target tasks, and the high-level knowledge stored in the memory can be transferred to a new decomposed target task using a number of value composition methods. 
% While it is possible to express task encoded with high-level events by LTL formulas and utilize its corresponding RM to facilitate learning, it is not straightforward to leverage previous learned tasks to construct a new RM and learn with it when faced with new tasks over lifetime.
%Building upon the RM in memory, we then propose various value composition methods to transfer the high-level knowledge from the memory to a new task.
%including \textit{Average Composition, Max Composition, Left Composition} and \textit{Right Composition}. We evaluate these methods in two benchmark domains. Experimental results show that  when the target tasks are composed by the \textit{``and'', ``or''} and \textit{``then''} operator, the \textit{Average Composition, Max Composition} and \textit{Left Composition} result in the best performance, respectively. 
We evaluate the performance of LSRM in the \textsc{OfficeWorld} and \textsc{MineCraft} domains. Experiments show that LSRM enables the agent to learn target tasks more efficiently compared to the direct methods without transfer learning.

The remaining part of the paper is organized as follows. Section 2 provides a background introduction. Section 3 introduces the SLTL language. Section 4 presents the LSRM and  Section 5 provides experimental studies. Section 6 reviews some related works, and finally, Section 7 concludes with some directions of future work.

\section{Preliminaries}
We provide a preliminary introduction to RL, LTL and RM in this section. Please refer to \cite{sutton2018reinforcement,pnueli1977temporal,toro2020reward} for more details in these topics.
\subsection{RL}
The RL problem consists of an agent interacting with an unknown environment~\cite{sutton2018reinforcement}, which can be modeled as a \textit{Markov Decision Process} (MDP) by a tuple $M=(S,A,r,p,\gamma)$, where $S$ is a finite set of states, $A$ is a finite set of actions, $r:S\times A\times S\to \mathbb{R}$ is a reward function, $p:S\times A\times S\to [0,1]$ is a probabilistic transition function, and $\gamma\in (0,1]$ is a discount factor. 

A policy is a mapping $\pi:S\times A\to [0,1]$, and $\pi(s,a)$ means the probability of choosing action $a$ in state $s$. The Q-function $Q_\pi(s,a)$ following the policy $\pi$ is the expected discounted reward of choosing action $a$ in state $s$ under policy $\pi$, \textit{i.e.,} $Q_\pi(s,a)=\mathbb{E}_\pi[\sum_{k=0}^\infty \gamma^k r_{t+k+1}\mid s_t=s,a_t=a]$,~where $s_{k+1}\sim p(s_k,a_k,\cdot), r_{k+1}= r(s_k,a_k,s_{k+1})$, and $a_k\sim \pi(s_k,\cdot)$. The goal of RL is to learn an optimal policy $\pi^*$, which maximizes the expected discounted reward for each $s\in S,a\in A$, \textit{i.e.,} $Q^*(s,a):=Q_{\pi^*}(s,a)=\max_\pi Q_\pi(s,a)$. A well-known approach for calculating the optimal Q-function in tabular case is Q-learning \cite{watkins1992q}. Its one-step updating rule is given by $Q(s,a)\xleftarrow{\alpha}r(s,a,s')+\gamma \max_{a'} Q(s',a')$, where $x\xleftarrow{\alpha}y$ means $x\leftarrow x+\alpha(y-x)$. The action $a$ is chosen by using certain exploration strategies, such as the $\epsilon$-greedy policy, \textit{i.e.,} choosing a random action with probability $\epsilon$, while choosing $\argmax_{a'}Q(s,a')$ with probability $1-\epsilon$.

\subsection{LTL}
LTL is a propositional modal logic with temporal modalities \cite{pnueli1977temporal}. It has been used to specify tasks encoded with high-level events in RL by characterizing the successful and unsuccessful executions \cite{toro2018teaching,li2017reinforcement}. Each high-level event is represented by a \textit{propositional variable}, and the set of all propositional variables is denoted by $\P$. The set of events that occur at time $t=i$ under state $s_i$ is a \textit{label}, denoted by $l_i\subseteq \P$, which is given by a \textit{labelling function} $L:S\to 2^\P, l_i=L(s_i)$. As an illustrative example, consider the \textsc{OfficeWorld} domain presented in Figure \ref{fig:OfficeWorld}. The propositional variables can be $\P=\{c,m,o,*,A,B,C,D\}$, where $c$ is \textit{``getting coffee''}, $m$ is \textit{``getting mail''}, $o$ is \textit{``at office''}, $*$ is \textit{``furniture''}, and $A,B,C,D$ is \textit{``at A, B, C, D''}, respectively. An event $p\in \P$ occurs if and only if the agent is located at the grid marked by $p$. Hence the labelling function is $L(s)=\{p\}$ if $s$ is marked by $p$, and $L(s)=\varnothing$ otherwise. Given the propositional variables $\P$, LTL formulas can be conducted from the standard Boolean operators $\land$ (\textit{and}), $\neg$ (\textit{negation}), and temporal operators $\bigcirc$ (\textit{next}), $\until$ (\textit{until}). Other operators can be derived from the operators above. For example, the operator $\lor$ (\textit{or}) and $\Diamond$ (\textit{eventually}) are defined as $\varphi\lor\psi=\neg(\neg\varphi\land\neg\psi)$ and $\Diamond \varphi=\true\until\varphi$, respectively, where $\true=\varphi\lor\neg\varphi$ is the formula \textit{``true''}. Formally, the syntax of an LTL formula is defined as $\varphi:=p\mid \neg \varphi \mid \varphi\land\psi\mid \bigcirc\varphi \mid \varphi\until\psi$, where $p\in \P$. 

The semantic of LTL is defined over an infinite sequence of labels, denoted $\lambda=l_0l_1l_2\cdots$. We use $\lambda\models \varphi$ to denote that an LTL formula $\varphi$ is determined to be true by a sequence $\lambda$, which is formally defined as follows: (i) $\lambda\models p\iff p\in l_0$, where $p\in \P$; (ii) $\lambda\models \neg \varphi\iff \lambda\not\models\varphi$; (iii) $\lambda\models\varphi_1\land\varphi_2\iff \lambda\models\varphi_1$ and $\lambda\models\varphi_2$;  (iv) $\lambda\models \bigcirc\varphi\iff \lambda^1\models \varphi$; and (v) $\lambda\models \varphi_1\until\varphi_2\iff \exists j>0,\lambda^j\models \varphi_2$ and $\forall i\leq j, \lambda^i\models \varphi_1$. The notation $\lambda^i$ denotes the postfix $l_il_{i+1}l_{i+2}\cdots$ of the sequence. Based on the above definitions, we can define tasks using LTL formulas. For instance, the task \textit{``eventually reaching office''} is defined as $\Diamond o$, and the task \textit{``delivering coffee to office''} is defined as $\Diamond (c\land\bigcirc\Diamond o)$.

In order to specify the remaining module of a task after proceeding a sequence of labels, LTL~\textit{progression}~\cite{bacchus2000using} has been proposed. For example, we can progress the formula $\Diamond (c\land\bigcirc\Diamond o)$ by a label $\{c\}$ and get a new formula $\Diamond o$, implying that the remaining task is \textit{``going to office''} after getting the coffee. Formally, an LTL progression maps an LTL formula $\varphi$ and a label $l$ to another LTL formula, denoted as $\prog(\varphi,l)$, which is defined recursively as follows: (i) $\prog(p,l)=\true$ if $p\in l$, where $p\in \P$; (ii) $\prog(\neg\varphi,l)=\neg\prog(\varphi,l)$; (iii) $\prog(\varphi_1\land\varphi_2,l)=\prog(\varphi_1,l)\land\prog(\varphi_2,l)$; (iv) $\prog(\bigcirc\varphi,l)=\varphi$; and (v) $\prog(\varphi_1\until\varphi_2,l)=\prog(\varphi_2,l)\lor(\prog(\varphi_1,l)\land\varphi_1\until\varphi_2)$. It has been theoretically proved that a sequence of labels satisfies an LTL formula if and only if the postfix of the sequence satisfies the progressed formulas, \textit{i.e.,} $\lambda^i\models\varphi\iff \lambda^{i+1}\models \prog(\varphi,l_i)$ \cite{bacchus2000using}.

\begin{figure}
    \centering
    \subfigure[\textsc{OfficeWorld}.]{\label{fig:OfficeWorld}
    \includegraphics[width=0.45\linewidth]{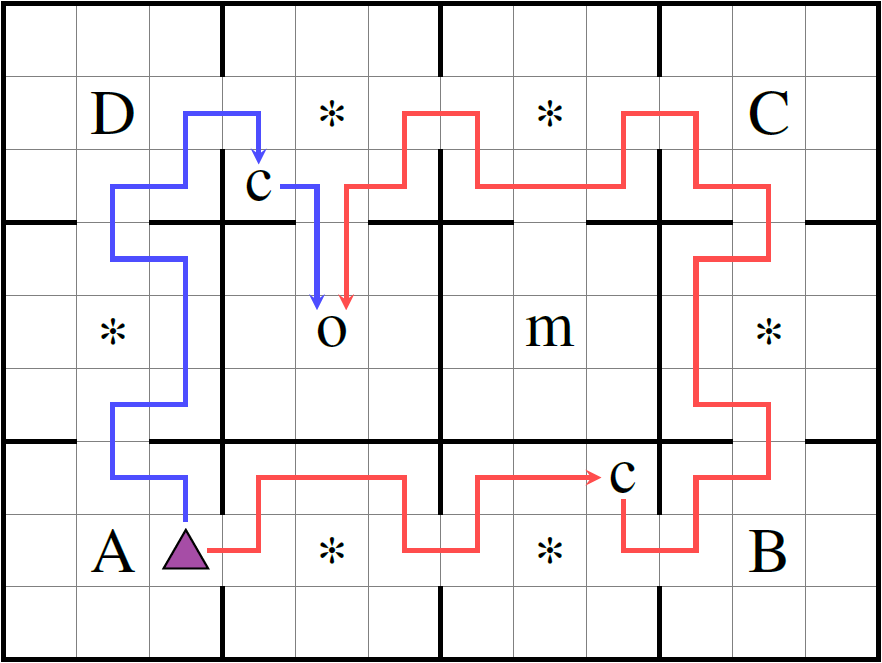}}
    \subfigure[An example of RM.]{\label{fig:example_rm1}
    \includegraphics[width=0.45\linewidth]{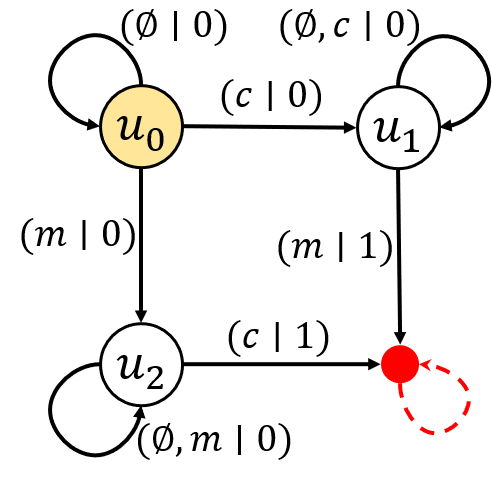}}
    \subfigure[Encoding RM with LTL.]{\label{fig:example_rm2}
    \includegraphics[width=0.45\linewidth]{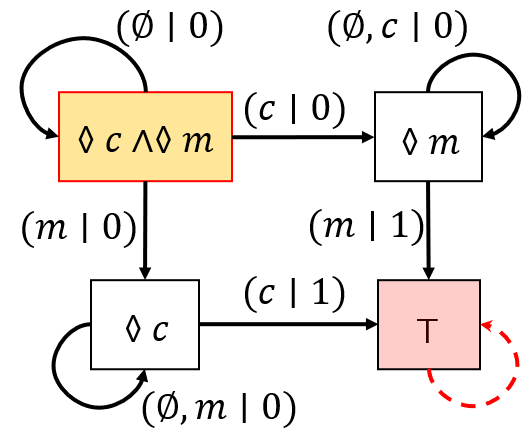}}
    \subfigure[Reward shaping for RM.]{\label{fig:example_rm3}
    \includegraphics[width=0.45\linewidth]{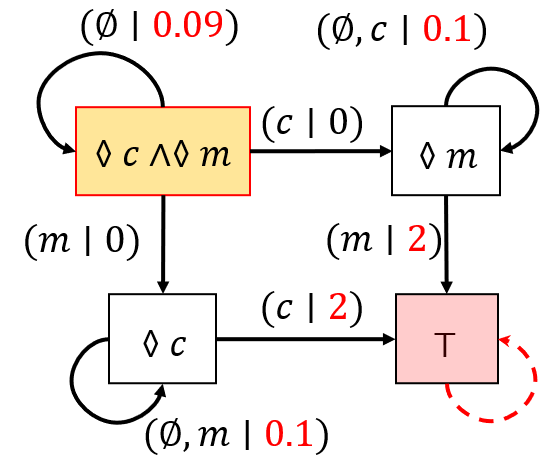}}
    \caption{Illustrations of the \textsc{OfficeWorld} environment and RM.}
\end{figure}

\subsection{RM}
RM can be used to reveal the structure of non-Markovian reward functions of tasks that are encoded with high-level events (\textit{i.e.,} propositional variables), formally defined as follows \cite{toro2020reward}.
\begin{define}
Given the propositional variables $\P$ and a set of all possible labels $\Sigma\subseteq 2^\P$, an RM is a tuple $\A=(U,u_0,F,\delta,R)$, where $U$ is a finite set of states, $u_0$ is an initial state, $F\subseteq U$ are terminal states, $\delta: U\times \Sigma\to U$ is the transition function,  and $R:U\times\Sigma\to\mathbb{R}$ is the reward function.
\end{define}
Distinguished from the states of the environment (denoted by $s$), the \textit{states} of RM (denoted by $u$) and the transitions among them are generally given by the prior knowledge of specific tasks. Figure \ref{fig:example_rm1} gives an example of RM that represents the task \textit{``eventually getting coffee and mail''} in \textsc{OfficeWorld}, where the nodes are states and the edges are transitions among states. Each transition is encoded with a tuple $(l_1,l_2,\cdots,l_n\mid r)$, where each $l_i$ is a label and $r$ is the output reward. The initial state is colored in yellow and the terminal state is colored in red. In order to  enrich the expressiveness of RM, each state of RM can also be encoded with an LTL formula (Figure \ref{fig:example_rm2}), implying a portion of a task that the agent has not completed yet.
%These encodings enable us to \textit{automatically} transfer knowledge by \textit{matching formulas} between the target tasks and previous learned tasks. 
Especially, the terminal states are encoded with $\true$(\textit{true}) or $\false$(\textit{false}), which indicates the completion or failure of a task, respectively. When an RM transits to terminal states, the current episode ends. The reward function of RM is then defined as follows:
\begin{equation}
\label{equ:reward function}
    R(\psi,l)=
    \begin{cases}
    1,\quad \text{if }\psi\neq \true, \delta(\psi,l)=\true;\\
    0,\quad \text{otherwise}.
    \end{cases}
\end{equation}
In order to induce denser rewards than binary rewards above, \textit{automatic reward shaping} for RM~\cite{camacho2019ltl,toro2020reward} is proposed. It modifies the reward function as $R'(\psi,l)=R(\psi,l)+\gamma \Phi(\delta(\psi,l))-\Phi(\psi)$, where $\Phi:U\to\mathbb{R}$ is the potential function calculated by value iteration. The modified reward function of the example above is presented in Figure \ref{fig:example_rm3}.

The QRM algorithm \cite{icarte2018using} is then proposed to leverage RM to learn tasks. QRM maintains a Q-function $Q^u$ for each RM state $u\in U$ (or formula), and updates all the Q-functions using \textit{internal} rewards and transitions of RM with one experience $(s,a,s')$. Formally, the Q-function of each state in RM is updated by
\begin{equation}
Q^u(s,a)\xleftarrow{\alpha} R(u,l)+\gamma \max_{a'}Q^{u'}(s',a'), \forall u\in U
\end{equation}
where $l=L(s)$ is the current label, $R(u,l)$ is the internal reward and $u'=\delta(u,l)$ is the internal next state of RM. QRM not only converges to an optimal policy in tabular cases, but also outperforms the Hierarchical RL (HRL) methods which might converge to suboptimal policies \cite{icarte2018using}.

% In our framework, we denote $F^+, F^-$ to be a set of \textit{positive} and \textit{negative} terminal states, implying the agent completes or fails a task, respectively, where $F^+,F^-$ are disjoint union of $F$. We define the transitions of terminal states as $\delta(u,l)=u$ for all $u\in F,l\in \Sigma$. A simple way to induce the reward function is 
% \begin{equation}
%     R(u,l)=
%     \begin{cases}
%     1, \text{if }u\not\in F^+,\delta(u,l)\in F^+;\\
%     0, \text{otherwise.}
%     \end{cases}
% \end{equation}

% \begin{define}
% Given an MDP $(S,A,p,\gamma)$, a set of propositional variables $\P$, a set of all possible labels $\Sigma\subseteq 2^\P$, and a labelling function $L:S\times A\times S\to \Sigma$, we define the extended \textbf{MDP with an RM} (MDPRM) \textit{w.r.t.} $\A=(U,u_0,F,\delta,R)$ as a tuple $\mathcal{M}=(S\times U,A,r',p',\gamma)$, where 
% % $S,A,p$ and $\gamma$ are defined as in an MDP, $\P$ is a set of propositional variables, $\Sigma\subseteq 2^\P$ is the set of all possible labels, $L:S\times A\times S\to \Sigma$ is a labelling function, and $\A$ is an RM.
% the reward function $r'$ is defined as $r'((s,u),a,(s',u'))=R(u,L(s,a,s'))$, and the probabilistic transition function $p'$ is defined as
% \begin{equation}
%     p'((s',u')\mid (s,u),a)=
%     \begin{cases}
%     p(s'\mid s,a),&\quad \text{if }u\not\in F,u'=\delta(u,L(s,a,s'))\\
%     1,&\quad \text{if }u\in F,u=u',s=s'\\
%     0,&
%     \quad \text{otherwise.}
%     \end{cases}
% \end{equation}
% \end{define}

\section{SLTL: Sequential Linear Temporal Logic}
In order to enable decomposition of sequential tasks, we add a new temporal operator $\sim$(\textit{then}) into the traditional LTL, resulting the \textit{Sequential LTL} (SLTL). Being compatible with LTL, SLTL provides a more succinct and flexible way to describe sequential tasks, and more importantly, enables us to exploit task modularity for more efficient transfer learning than LTL. For example, the task ``eventually complete $a$ then $b$'' is expressed as $\Diamond(a\land\bigcirc(\Diamond b))$ using LTL, but more directly as $(\Diamond a)\sim (\Diamond b)$ using SLTL. The latter expression can be decomposed into subtasks $\Diamond a$ and $\Diamond b$, the knowledge of which can be readily transferred to the learning of target task, say $(\Diamond b)\sim (\Diamond a)$. However, such straightforward manipulation cannnot be readily realized using the expression of LTL, since we cannot extract $\Diamond a$ from the LTL formula of $\Diamond(a\land\bigcirc(\Diamond b))$. Please refer to Appendix.A. for more details in the syntax and semantics of SLTL.
% state in introduction
% SLTL plays an important role in our lifelong learning framework, including describing the temporal tasks, automatically generating an RM and decomposing the target tasks to transfer knowledge from the learned tasks. 
%We first give the syntax and semantics of SLTL, and propose SLTL progression for formulas with the \textit{``then''} operator. We then define \textit{rational} SLTL in order to generate finite state RM without failure. At last we prove the laws of the temporal operator \textit{``then''} to provide more flexible specification of task modularity for wider possibility of knowledge transfer.

We prove the laws of the operator \textit{``then''} to provide different ways of decomposing a target task. 
\begin{theorem}
For any SLTL formulas $\varphi_1,\varphi_2,\varphi_3$, we have the associative law: $(\varphi_1\sim\varphi_2)\sim\varphi_3=\varphi_1\sim(\varphi_2\sim\varphi_3)$, and the following distribution laws: 
\begin{enumerate}[(i)]
    \item $(\varphi_1\sim\varphi_2)\land(\varphi_1\sim\varphi_3)=\varphi_1\sim(\varphi_2\land\varphi_3)$;
    \item $(\varphi_1\sim\varphi_3)\land(\varphi_2\sim\varphi_3)=(\varphi_1\land\varphi_2)\sim\varphi_3$;
    \item $(\varphi_1\sim\varphi_2)\lor(\varphi_1\sim\varphi_3)=\varphi_1\sim(\varphi_2\lor\varphi_3)$;
    \item $(\varphi_1\sim\varphi_3)\lor(\varphi_2\sim\varphi_3)=(\varphi_1\lor\varphi_2)\sim\varphi_3$.
\end{enumerate}
\end{theorem}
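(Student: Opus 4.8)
The plan is to treat each identity as a \emph{semantic} equivalence of SLTL formulas: I fix an arbitrary infinite label sequence $\lambda=l_0l_1\cdots$ and prove $\lambda\models\text{LHS}\iff\lambda\models\text{RHS}$, so that the two formulas accept exactly the same traces. The only non-standard ingredient is the meaning of the \emph{then} operator from Appendix A, which I will use in the following operational form: $\lambda\models\varphi\sim\psi$ holds precisely when $\varphi$ is \emph{first completed} at some index $j$ (the progression of $\varphi$ through the prefix $l_0\cdots l_{j-1}$ first reaches $\true$) and the suffix satisfies $\lambda^j\models\psi$. I combine this with the progression identities already recorded in the preliminaries, namely $\prog(\varphi_1\land\varphi_2,l)=\prog(\varphi_1,l)\land\prog(\varphi_2,l)$ and its $\lor$-analogue, together with the fundamental progression theorem $\lambda^i\models\varphi\iff\lambda^{i+1}\models\prog(\varphi,l_i)$.

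First I would dispatch the associative law, which serves as a warm-up and fixes the bookkeeping. Unfolding $(\varphi_1\sim\varphi_2)\sim\varphi_3$ asks for an index $J$ at which $\varphi_1\sim\varphi_2$ is completed with $\lambda^J\models\varphi_3$; expanding the inner completion shows this is the same as requiring an index $j_1\le J$ such that $\varphi_1$ is completed at $j_1$ and $\varphi_2$ is completed along the segment from $j_1$ to $J$. Unfolding $\varphi_1\sim(\varphi_2\sim\varphi_3)$ instead first completes $\varphi_1$ at $j_1$ and then demands $\lambda^{j_1}\models\varphi_2\sim\varphi_3$, which expands to the very same pair of nested completions ending at $J$ with $\lambda^J\models\varphi_3$. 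Hence both groupings describe an identical three-phase schedule, and the only technical point is that \emph{completion of a $\sim$-formula composes}: progressing $\varphi_1\sim\varphi_2$ reaches $\true$ exactly when $\varphi_1$ is first completed and then $\varphi_2$ is subsequently completed, which follows by induction on the number of progression steps.

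The laws (i) and (iii), where the Boolean connective sits on the \emph{second} operand, are then immediate. In $(\varphi_1\sim\varphi_2)\star(\varphi_1\sim\varphi_3)$ versus $\varphi_1\sim(\varphi_2\star\varphi_3)$ for $\star\in\{\land,\lor\}$, the sequencing point is governed by $\varphi_1$ alone and is therefore the \emph{same} index $j$ in every term; once this common $j$ is extracted, the claim collapses to the propositional fact that $\lambda^j\models\varphi_2\star\varphi_3$ iff the corresponding Boolean combination of $\lambda^j\models\varphi_2$ and $\lambda^j\models\varphi_3$ holds.

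The real work, and the step I expect to be the main obstacle, is the pair (ii) and (iv), where the connective sits on the \emph{first} operand, since the completion index of the combined left operand need not agree with those of $\varphi_1$ and $\varphi_2$ separately. Using the progression rules above, $\varphi_1\land\varphi_2$ is first completed at the \emph{later} of the two indices $\max(j_1,j_2)$, whereas $\varphi_1\lor\varphi_2$ is first completed at the \emph{earlier} one $\min(j_1,j_2)$. Thus for (ii) the right-hand side evaluates $\varphi_3$ only at $\max(j_1,j_2)$, while the left-hand side evaluates it at both $j_1$ and $j_2$, and for (iv) the situation is dual. To reconcile the two sides I would isolate and prove a \emph{downward-closure} lemma for suffix satisfaction of the tasks involved, namely that $\lambda^j\models\varphi_3$ implies $\lambda^{j'}\models\varphi_3$ for all $j'\le j$, which holds because the relevant SLTL tasks are achievement properties witnessed by a finite good prefix. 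Granting this lemma, ``$\varphi_3$ at $j_1$ and at $j_2$'' is equivalent to ``$\varphi_3$ at $\max(j_1,j_2)$'', and dually ``$\varphi_3$ at $j_1$ or at $j_2$'' is equivalent to ``$\varphi_3$ at $\min(j_1,j_2)$'', closing both directions of (ii) and (iv). I would therefore spend most of the effort pinning down the precise completion-point convention for $\sim$ and verifying this closure property, as it is exactly the interaction between the connective on the first operand and the sequencing point that could otherwise break the equalities.
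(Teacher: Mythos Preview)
Your treatment of the associative law and of laws (i) and (iii) is essentially the paper's argument: unfold the semantics of $\sim$, use that the ``first completion index'' of the common left operand $\varphi_1$ is uniquely determined by $\lambda$, and reduce to the Boolean semantics on the suffix. That part is fine and matches the appendix proof almost verbatim.

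The gap is in your plan for (ii) and (iv). You correctly observe that when the Boolean connective sits on the \emph{first} operand, the completion indices $j_1,j_2$ of $\varphi_1,\varphi_2$ may differ and that $(\varphi_1\land\varphi_2)$, resp.\ $(\varphi_1\lor\varphi_2)$, is first completed at $\max(j_1,j_2)$, resp.\ $\min(j_1,j_2)$. But the ``downward-closure'' lemma you propose to bridge the two sides, namely that $\lambda^{j}\models\varphi_3$ implies $\lambda^{j'}\models\varphi_3$ for all $j'\le j$, is \emph{false} for arbitrary SLTL formulas. Take $\varphi_3=c$ for an atom $c$: then $\lambda^{j}\models c$ just means $c\in l_j$, which is obviously not monotone in $j$. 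Concretely, with $\varphi_1=\Diamond a$, $\varphi_2=\Diamond b$, $\varphi_3=c$ and the trace $l_0=\{a\},\,l_1=\varnothing,\,l_2=\{b\},\,l_3=\{c\},\,l_k=\varnothing\ (k\ge 4)$, one checks $\lambda\models(\varphi_1\land\varphi_2)\sim\varphi_3$ (completion at $j=2$, and $c\in l_3$) while $\lambda\not\models\varphi_1\sim\varphi_3$ (completion at $j=0$, but $c\notin l_1$), so (ii) fails; swapping the roles gives a counterexample to (iv). Your justification that ``the relevant SLTL tasks are achievement properties'' silently restricts $\varphi_3$ to a co-safety fragment, but the theorem is stated for \emph{all} SLTL formulas, so this does not close the argument.

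For comparison, the paper's own proof in Appendix~A only spells out law~(i) and then declares ``the proofs of (ii)--(iv) are similar''; it never confronts the index-mismatch you identified. In other words, you have put your finger on a real difficulty that the paper's proof simply skips. Your proposed repair does not work in the stated generality, and in fact the counterexample above suggests that (ii) and (iv) cannot hold for arbitrary $\varphi_3$ under the first-completion semantics of $\sim$; any correct version will need an explicit hypothesis on $\varphi_3$ (e.g.\ suffix-monotone/co-safety), or a different semantics for $\sim$ on compound left operands.
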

\begin{proof}
See Appendix.A.
\end{proof}

\begin{figure}[ht]
    \centering
    \includegraphics[width=0.9\linewidth]{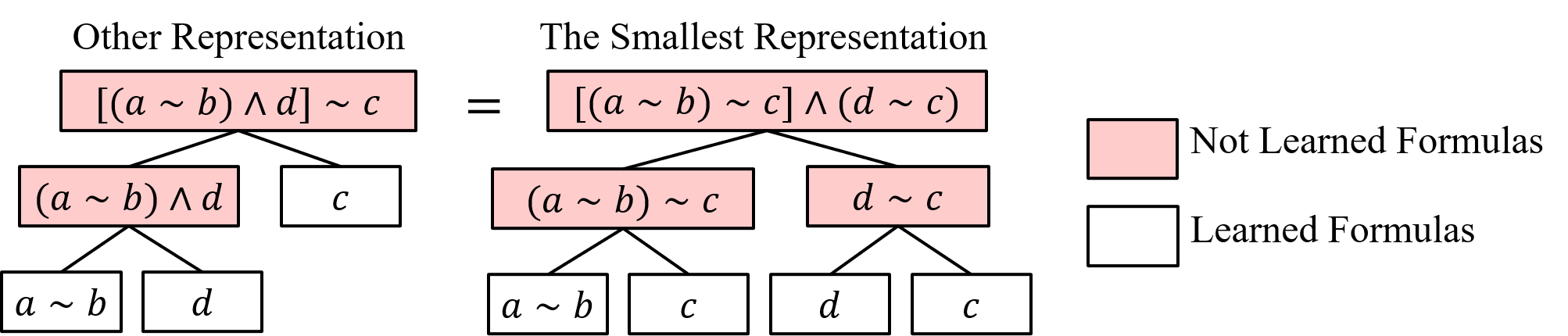}
    \caption{Two different ways of task decomposition using the operator laws.}
    \label{fig:operator_laws}
\end{figure}

The above operator laws enable various representations of a task, leading to different ways of task decomposition and thus diverse learning efficiency. Figure \ref{fig:operator_laws} gives an illustration: the two formulas on the top are equivalent representations of the same target task. Suppose that formulas $a\sim b,c,d$ (colored in white) indicate the subtasks that have been learned before (i.e., \textit{learned formulas}), while the remaining formulas (colored in red) indicate the new tasks that have not been learned yet. It is clear that the left presentation has fewer new subtasks than the right one. More formally, we define \textit{the smallest representation} of a task using the operator laws as follows:
\begin{define}
\label{def: smallest representation}
Let $\varphi_1,\varphi_2,\cdots,\varphi_n$ be the different representations of a target task, and $T_1,T_2,\cdots,T_n$ be their sub-formulas decomposed by the $\land,\lor,\sim$ operators. Given a set of learned formulas $T_M$, the smallest representation $\varphi^*$ of the target task is the one with the smallest number of subtasks that have been not learned before, \textit{i.e.,}
\begin{equation}
    \varphi^*=\varphi_i, \quad\text{where } i=\argmin_{i=1,2,\cdots,n} |T_i\setminus T_M|.
\end{equation}
\end{define}

The consideration of the smallest representation of a target task can be attributed to the fact that when facing a new task, it is more likely to learn faster if the task can be decomposed into fewer unknown subtasks. We will provide  experimental evaluations on this issue in Subsec.~\ref{sec: evaluation of smallest representation}. 

\section{LSRM: Lifelong Learning with SLTL \\and RM}
In this section, we introduce the LSRM method that combines the benefits of both SLTL and RM for efficient learning of logically specified tasks. Formally, LSRM maintains a \textit{memory} $M=(\A_M,T_M,\mathbf{Q}_M)$ for the agent over lifetime, where $\A_M=(U_M,u_0,F,\delta,R)$ is the memory RM, $T_M$ is the set of learned SLTL formulas from previous tasks, and $\mathbf{Q}_M=\{Q^\varphi(s,a)\mid \varphi\in U_M\}$ is the set of Q-functions corresponding to the states in $\A_M$. 
% Figure \ref{fig: LSRM} presents a
% n illustrative example including three phases over lifetime, where the agent learns to \textit{``deliver coffee to office''} and \textit{``deliver mail to office''} in the first phase; \textit{``deliver coffee and mail to office''} in the second phase, and other task in the third phases. 
In a learning phase of the lifetime, the agent is required to learn a set of target tasks, and different representations of these tasks can be generated by the operator laws. The smallest representation of the target task is then chosen according to the learned formulas $T_M$ in the memory, and then fed into Algorithm~\ref{alg:lifelong} as a set of target formula(s) $T$.
% The memory is then updated by extending the RM (see Subsec.\ref{sec:extension of RM}) and transferring knowledge from the learned Q-functions in blue to the new Q-functions colored in yellow (see Subsec.\ref{sec: evaluation of heuristic transferring}), as given by Algorithm \ref{alg:lifelong}.
Concretely, for each target formula $\varphi$ in $T$, LSRM first \textit{extends} the memory RM and returns its new extended states denoted as $U_{new}^\varphi$ (Line 2-3).
% by extracting formulas from $\varphi$ and storing the newly extracted formulas as extended  states of RM, as well as their transitions (Line 2), so that the extended RM includes all the portions of the input target task that account for future learning.
% After extension, the extended states are returned, denoted by $U_{new}^\varphi$ (Line 3). 
The learned Q-functions are then transferred from the memory to the new Q-function corresponding to each state in $U_{new}^\varphi$ (Line 4-5). Finally, the QRM algorithm is utilized to learn the target formulas (Line 8). When the agent starts to implement a task $\varphi\in T$ in an episode of QRM, the initial state of the memory RM is set to be $u_0=\varphi$. The set of learned task $T_M$ is then updated after the QRM learning procedure (Line 9). We show the details of extension of the memory RM and transferring knowledge in the following subsections. 

% \begin{figure}[ht]
%     \centering
%     \includegraphics[width=1\linewidth]{LSRM.png}
%     \caption{Illustration of the LSRM framework.}
%     \label{fig: LSRM}
% \end{figure}

\begin{algorithm}[ht]
\caption{The Update of the Memory in LSRM}
\label{alg:lifelong}
\textbf{Input}: Set of target formula(s) $T$, memory $M$ %=(\A_M,T_M,\mathbf{Q}_M)$

\textbf{Output}: New memory $M$ %=(\A_M,T_M,\mathbf{Q}_M)$
\begin{algorithmic}[1] %[1] enables line numbers
    % \STATE Choose the smallest representation for each target task
    \FORALL{target formula $\varphi\in T$}
        \STATE $M$.ExtendRM($\varphi$) 
        \STATE $U_{new}^\varphi\leftarrow M$.ReturnNewStates() 
        \FORALL{extended state $\psi\in U_{new}^\varphi$}
            \STATE $Q^{\psi}(s,a)\leftarrow M$.AcquireKnowledge($\psi$) 
        \ENDFOR
    \ENDFOR
    \STATE $M$.QRMRun($T$)
    \STATE $T_M\leftarrow (\cup_{\varphi\in T} U_{new}^\varphi) \cup T_M$
\end{algorithmic}
\end{algorithm}

\subsection{Extensions of Memory RM} \label{sec:extension of RM}

In this subsection, we discuss how to extend the memory RM with a target task prescribed by an SLTL formula, so that the extended RM includes all the modules of the target task for future learning. On one hand, the target formula can be iteratively progressed in order to extract sub-formulas using the SLTL progression, inspired by \textit{LTL Progression for Off-Policy Learning} \cite{toro2018teaching}. On the other hand, it can also be decomposed using the \textit{``then''} operator to extract sub-formulas accounting for the possibly encountered tasks in the future. Each extracted sub-formula by either progression or decomposition indicates a module of the target task. Whenever an extracted formula is new, \textit{i.e.,} not in the states $U_M$ of the original memory RM, the formula and its corresponding transitions will be stored as an extended state of the RM. 
%Figure \ref{fig:ExtendRM} gives an illustration of the extensions of memory RM when faced with target tasks $\varphi_1=(\Diamond c)\sim (\Diamond o)$:\textit{``deliver coffee to office'', $\varphi_2=(\Diamond c)\sim (\Diamond o)$:``deliver mail to office''} simultaneously at first, and then $\varphi_3=(\Diamond c\land \Diamond m)\sim (\Diamond o)$:\textit{``deliver coffee and mail to office''} later. The white nodes (rectangles) are original states, and the red nodes and arrows are extended states (\textit{i.e.,} new formulas) and their transitions, respectively. The memory RM (shown in left) is initialized as a terminal state without any transitions. Then we input $\varphi_1$ and $\varphi_2$ , resulting in three progressed formulas $\varphi_1,\varphi_2,\Diamond o$, and two formulas $\Diamond c, \Diamond m$ decomposed by $\varphi_1,\varphi_2$, respectively (shown in middle). The decomposed formulas accounts for the future tasks that contain the module of \textit{``getting coffee''} and \textit{``getting mail''}. After that, we input $\varphi_3$ and resulting in one progressed formula $\varphi_3$, and one formula $\Diamond c\land \Diamond m$ decomposed by $\varphi_3$ (shown in the right).

Formally, for each input target formula $\varphi$, the set of extracted formulas is initialized as $T_{ex}=\{\varphi\}$ and iteratively updated by:
 \begin{equation}
     T_{ex}\leftarrow T_{ex}\cup T_{\prog}\cup T_{dec}, 
 \end{equation}
where $T_{\prog}=\{\prog(\psi,l)\mid \psi\in T_{ex},l\in \Sigma\}$ is the set of formulas progressed by all the labels, and $T_{dec}=\{\psi_1\mid \psi\in T_{ex}, \psi=\psi_1\sim\psi_2\}$ is the set of sub-formulas decomposed by the $\sim$ operator. The iteration terminates if and only if the set $T_{ex}$ does not change anymore. 
% % The existence of such stable point is ensured by the rationality of the input formula $\varphi$, formally stated as Theorem xx. 
Denote the original memory RM as $\A_M=(U_M,u_0,F,\delta,R)$, then the new states are $U_{new}=T_{ex}\setminus U_M$. Therefore, the extended RM is $\A_M'=(U_M',u_0,F',\delta',R')$, where the states are $U_M'=U_M\cup U_{new}$; the transition function is $\delta'(\psi,l)=\delta(\psi,l)$ for original states $\psi\in U_M$ and $\delta'(\psi,l)=\prog(\psi,l)$ for new states $\psi\in U_{new}$; the terminal states are $F'=U_M'\cap \{\true,\false\}$; and the reward function $R'$ is defined as Eq.~(\ref{equ:reward function}).  Algorithm~\ref{alg:ExtendRM} gives the procedure of extending RM and Figure~\ref{fig:ExtendRM} plots an illustrative example of such a process.

\begin{figure*}[!tb]
    \centering
    \includegraphics[width=0.8\linewidth]{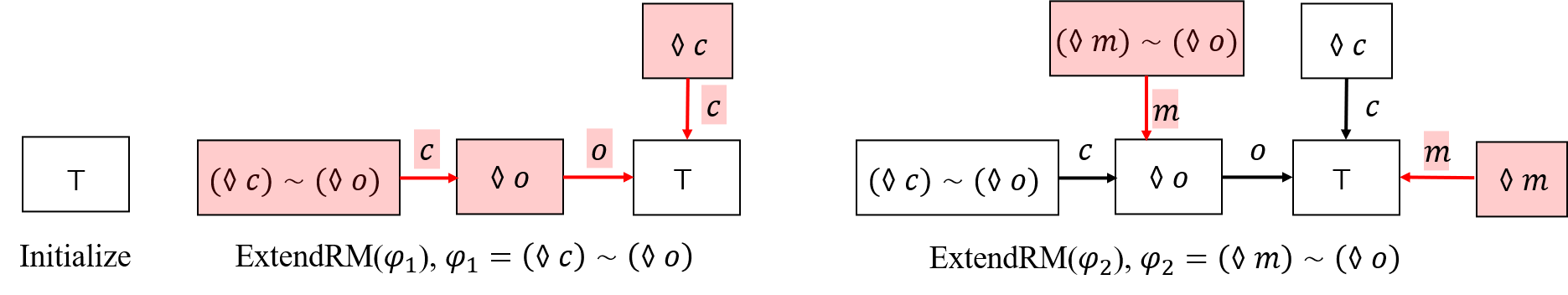}
    \caption{An illustration of extensions of memory RM when faced with target tasks $\varphi_1=(\Diamond c)\sim (\Diamond o)$:\textit{``deliver coffee to office''} at first then $\varphi_2=(\Diamond m)\sim (\Diamond o)$:\textit{``deliver mail to office''} later. The white nodes (rectangles) are original states, and the red nodes and arrows denote the extended states (\textit{i.e.,} new formulas) and their transitions, respectively. \textbf{Left:} The memory RM is first initialized as a terminal state without any transitions. \textbf{Middle:} When learning new task $\varphi_1$,  the task formula is first added to the RM as a new state, and decomposed by the $\sim$ operator, resulting in new formula $\Diamond c$. Then, the formula $\varphi_1$ is progressed~by each label $\{\},\{c\},\{m\},\{o\}$, resulting in sub-formula $\varphi_1,\Diamond o,\varphi_1,\varphi_1$, respectively. The new sub-formulas $\Diamond c,\Diamond o$ are not in the memory and thus added to the RM as new states. After that,  $\Diamond c, \Diamond o$ are  decomposed and progressed as above, ending up with no new formulas. \textbf{Right:} Similarly, after adding the task formula $\varphi_2$ as new state, only a new decomposed formula $\Diamond m$ is added without no new progressed formulas.}
    \label{fig:ExtendRM}
\end{figure*}

%The implementation is shown in Algorithm \ref{alg:ExtendRM}. If the target SLTL formula $\varphi$ is contained in the memory RM, then there is no need to extend (Line 1-3). Otherwise, we maintain a work queue to store the new formulas, initialized as the input formula $\varphi$ (Line 4). Whenever the queue is not empty (Line 5), we pop a formula $\psi$ from the queue (Line 6) and firstly decompose it by the $\sim$ operator (Line 7). If the decomposed formula is new, it is  added to the RM and the queue (Line 8). After that, the formula $\psi$ is progress by each label in the label set $\Sigma$ (Line 10-11). Similarly, the progressed formula is added to the RM and the queue whenever it is new (Line . Finally, the transitions among new states are stored in the memory RM, which are consistent with SLTL progression (Line 15).  

\begin{algorithm}[ht]
\caption{ExtendRM($\varphi$)}
\label{alg:ExtendRM}
\textbf{Input}: Memory $M$, the target SLTL formula $\varphi$

\textbf{Output}: New memory $M$
\begin{algorithmic}[1] %[1] enables line numbers
    \IF{$\varphi\in U_M$}
        \STATE \textbf{return}
    \ELSE
        \STATE Initialize queue=[$\varphi$],  $U_M\leftarrow U_M\cup \{\varphi\}$
        \WHILE{\textbf{not} queue.empty()}
        \STATE $\psi\leftarrow$ queue.pop()
        \IF{$\psi=\psi_1\sim\psi_2$ \textbf{and} $\psi_1\not\in U_M$}
            \STATE $U_M\leftarrow U_M\cup \{\psi_1\}$, queue.append($\psi_1$)
        \ENDIF
        \FORALL{$l\in \Sigma$}
            \STATE $\psi'=\prog(\psi,l)$
            \IF{$\psi'\not\in U_M$}
                \STATE $U_M\leftarrow U_M\cup \{\psi'\}$, queue.append($\psi'$)
            \ENDIF
            \STATE Store transition  $\delta(\psi,l)=\psi'$ for $\A_M$
        \ENDFOR
        \ENDWHILE
    \ENDIF
\end{algorithmic}
\end{algorithm}

\subsection{Acquisition of Knowledge from the Memory}
\label{sec: knowledge transfer}
The key idea to acquire knowledge, \textit{i.e.}, Q-functions from the memory for a target formula $\varphi$, is to iteratively decompose the target formula $\varphi$ into sub-formulas until all the sub-formulas are in the learned tasks $T_M$, such that the Q-function of the target formula $\varphi$ can be initialized using the composed Q-values of the learned sub-formulas. %The recursive procedure of knowledge acquisition is shown in Algorithm \ref{alg:Transfer}. 
If the input formula $\psi$ has been learned, then its corresponding Q-function is returned directly. If $\psi$ has not been learned but can be decomposed into two sub-formulas $\psi_1\circ\psi_2$, where the operator $\circ$ is $\land,\lor$ or $\sim$, then the process first (\textit{i}) recursively transfers Q-functions from the memory to the Q-functions of $\psi_1,\psi_2$, which are denoted by $Q_1,Q_2$, respectively; and then (\textit{ii}) composes the values of $Q_1,Q_2$ and returns the results. Finally, if the target formula is neither learned nor decomposable, its Q-function is then randomly initialized.

In~\cite{van2019composing}, two composition methods (\textit{i.e.}, the \textit{maximum and average} composition) were proposed in entropy regularised
RL and proved to achieve optimal composition value function for the operator \textit{``or'', ``and''}, respectively. To consider the sequential operator \textit{``then''}, two new composition methods (\textit{i.e.}, the \textit{left} and \textit{right} composition) are proposed here. The four Q-value composition methods are listed as follows:

\begin{itemize}
    \item \textit{Average Composition}: $Q^{\psi_1\circ \psi_2}(s,a)\leftarrow  (Q^{\psi_1}(s,a)+Q^{\psi_2}(s,a))/2$;
    \item \textit{Max Composition}:  $Q^{\psi_1\circ \psi_2}(s,\cdot)\leftarrow
    \begin{cases}
    Q^{\psi_1}(s,\cdot), \text{if }\max_a Q^{\psi_1}(s,a)>\max_a Q^{\psi_2}(s,a);\\
    Q^{\psi_2}(s,\cdot), \text{otherwise};
    \end{cases}$
    \item \textit{Left Composition}: $Q^{\psi_1\circ \psi_2}(s,a)\leftarrow Q^{\psi_1}(s,a)$;
    \item \textit{Right Composition}: $Q^{\psi_1\circ \psi_2}(s,a)\leftarrow Q^{\psi_2}(s,a)$.
\end{itemize}

\section{Experimental Results} \label{sec: experiment}

In this section, we first evaluate the performance of different Q-value composition methods for each operator, and then verify if the learning performance can be improved when choosing the smallest representation of a task. Finally, we implement LSRM in learning a series of SLTL tasks and evaluate its performance, compared to some direct learning methods. All the experiments are evaluated in the \textsc{OfficeWorld} domain \cite{icarte2018using} and the \textsc{MineCraft} domain \cite{andreas2017modular}. 
%Both of the two domains are grid worlds and the agent can move in the four cardinal directions. Some of the grids are marked by a symbol $p\in \P$, and a proposition variable $p$ is true if and only if the agent is located at that marked grid. 
% Therefore, the label set of these domains is $\Sigma=\{\{p\}\mid p\in \P\}$, since there is at most one symbol marked in one grid. 
The parameters are set to be $\epsilon=0.1, \gamma=0.9$ and $\alpha=1$ in both domains. We report the performance by calculating the average steps to complete the target task(s) throughout the training process. We compare LSRM with the QRM and QRM with reward shaping (QRM+RS) \cite{toro2020reward}, and the final results are averaged over 20 independent trails. 

% \begin{figure}[ht]
%     \centering
    
%     \subfigure[\textsc{MineCraft}]{\label{fig:MineCraft}
%     \includegraphics[width=0.2\linewidth]{MineCraft.png}}
%     \caption{Two benchmark domains.}
%     \label{fig:domains}
% \end{figure}

\subsection{Evaluation of Different Composition Methods}
\label{sec: evaluation of heuristic transferring}
% introduction
% how to test
We sample tasks from a set of source tasks \textit{i.e.,} learned SLTL formulas, and compose them by the operators \textit{``and'', ``or''} and \textit{``then''} to define the target tasks. 
%For instance, if two of the source tasks are expressed by $\varphi_1$  and $\varphi_2$, then the target tasks are $\varphi_1\land \varphi_2, \varphi_1\lor\varphi_2$ and $\varphi_1\sim\varphi_2$. 
For more detailed information of the source tasks and the target tasks, see Appendix.B. The RM of these target tasks is then automatically generated using the method of extending RM in Subsec.~\ref{sec:extension of RM}. We then initialize the Q-functions of the target tasks by composing Q-values of the source tasks using the four value composition methods in Subsec.~\ref{sec: knowledge transfer}. Finally, QRM is applied to learn each target task. 

\begin{figure*}[ht]
    \centering
    \subfigure[Results in the \textsc{OfficeWorld} domain.]{\label{fig:E1office}
    \includegraphics[width=0.8\linewidth]{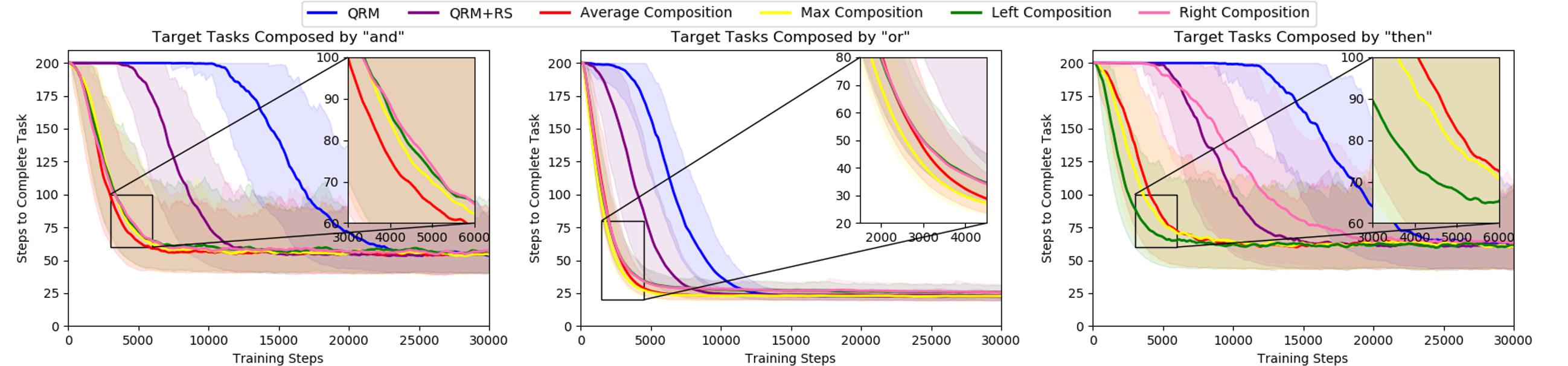}}
    \subfigure[Results in the \textsc{MineCraft} domain.]{\label{fig:E2craft}
    \includegraphics[width=0.8\linewidth]{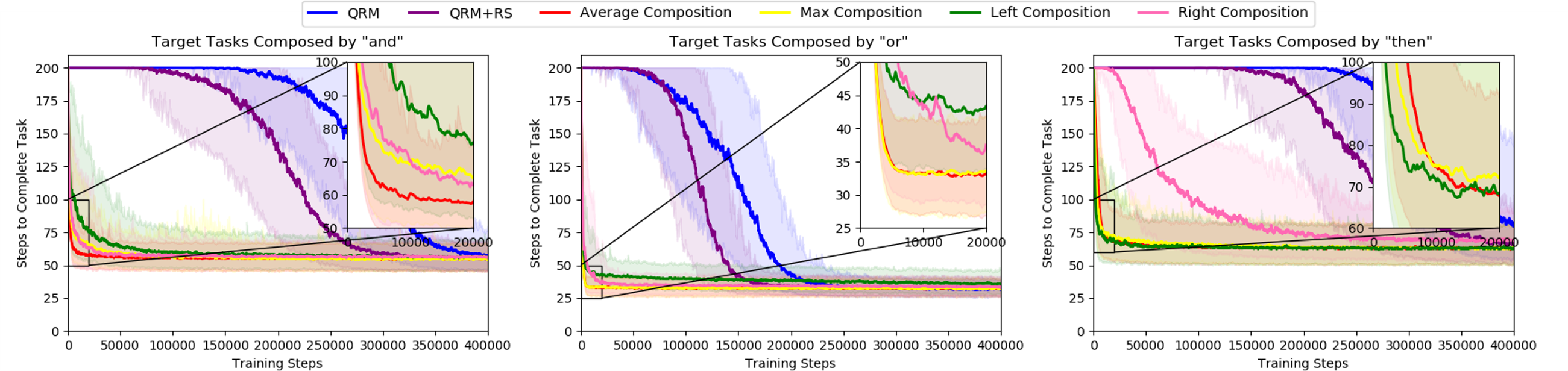}}
    \caption{Evaluation of different Q-value composition methods in the two domains. The learning efficiency of target tasks composed by the \textit{``and'', ``or''} and \textit{``then''} operators are shown in the left, middle and right column, respectively.}
    \label{fig:evaluation Q-value composition}
\end{figure*}
% result, explanation 
The experimental results are shown in Figure \ref{fig:evaluation Q-value composition}. In both domains, all the methods outperform QRM and QRM+RS by a large margin, except the \textit{Right Composition} for target tasks composed by \textit{``then''} in the \textsc{OfficeWorld} domain. The \textit{Average Composition, Max Composition, Left Composition} perform best when the target tasks are composed by the \textit{``and'', ``or'', ``then''} operators in both domains, respectively. For the \textit{``and''} operator, if an action $a$ is optimal under state $s$ for both learned tasks $\varphi_1$ and $\varphi_2$, then the optimal action of the averaged Q-values under $s$ is $a$. Thus, choosing $a$ under $s$ helps completing the target task $\varphi_1\land\varphi_2$. For the \textit{``or''} operator, the best way to complete the target task $\varphi_1\lor\varphi_2$ is to complete one of the source tasks $\varphi_1,\varphi_2$ with less steps. If $\max_{a'}Q^{\varphi_1}(s,a')> \max_{a'}Q^{\varphi_2}(s,a')$ under state $s$, then following the policy of $Q^{\varphi_1}$ helps completing the target task more quickly. The above results and analysis are consistent with the previous results in~\cite{van2019composing}, although their focus is on general RL tasks that are sampled from the same MDP distribution but only differ in their reward functions, while we focus on logically specified tasks that are not necessarily Markovian. Finally, for the \textit{``then''} operator, the agent has to complete $\varphi_1$ first when it implements the target task $\varphi_1\sim\varphi_2$. Therefore, following the policy of $Q^{\varphi_1}$ helps learning the sequential target task.

%According to the above results, we choose the \textit{Average Composition, Max Composition, Left Composition} when the target task is composed by \textit{``and'', ``or'', ``then''} operator, respectively, and apply the corresponding methods in the LSRM framework. 

\begin{figure}[ht]
    \centering
    \subfigure[ \textsc{OfficeWorld}.]{\includegraphics[width=0.48\linewidth]{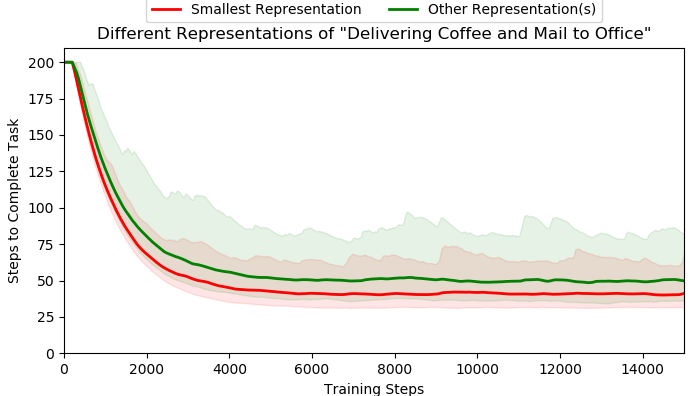}}
    \subfigure[ \textsc{MineCraft}]{\includegraphics[width=0.48\linewidth]{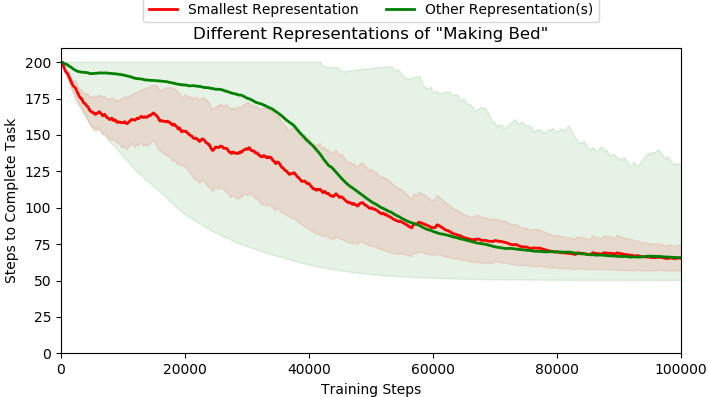}}
    \caption{Evaluation of different representations of tasks.}
    \label{fig: evaluation different representation}
\end{figure}

\subsection{Evaluation of Different Task Representations}
\label{sec: evaluation of smallest representation}
We evaluate the performance of learning with different formulas of the same target task. The target tasks are \textit{``delivering coffee to office avoiding furniture''} in the \textsc{OfficeWorld} domain and \textit{``making bed''} in the \textsc{MineCraft} domain.
% $(\varphi_c\land \varphi_m)\sim \varphi_o$ and $(\varphi_c\sim\varphi_o)\land(\varphi_m\sim\varphi_o)$, where $\varphi_c=(\neg *)\until c, \varphi_m=(\neg *)\until m,\varphi_o=(\neg *)\until o$. 
The detailed representations of these target tasks are shown in Appendix.B. From the results in Figure \ref{fig: evaluation different representation}, we can see that choosing the smallest representation with the fewest sub-formulas that have not been learned can improve the learning performance against other presentations. This result is reasonable as fewer unknown subtasks would bring less uncertainty during the value composition or initialization process, and thus facilitate learning of the target task.

\subsection{Evaluations of LSRM in LRL Tasks}
\label{sec: evaluation of LSRM}
Finally, we evaluate LSRM in learning a series of tasks using the best corresponding composition method for each operator as given in Subsec.~5.1 (denoted as LSRM-best) and the worst corresponding composition method (denoted as LSRM-worst), and compare them to QRM and QRM+RS. In the \textsc{OfficeWorld} domain, we define a sequence of 6 tasks, and require the agent to learn 2 tasks per phase, where each phase contains 30,000 training steps. For example, in the first phase of 30,000 steps, the agent repeatedly learns two tasks \textit{``deliver coffee to place A avoiding furniture''} and \textit{``deliver mail to place B avoiding furniture''} in order. In the \textsc{MineCraft} domain, we adopt 10 tasks defined by \cite{andreas2017modular}, and also require the agent to learn 2 tasks per phase, where each phase contains 400,000 training steps. For more detailed information of the tasks and illustrations of the LSRM processes, see Appendix.B and Appendix.C, respectively. Experimental results are shown in Figure \ref{fig:evaluation LSRM}. As can be seen, in both domains, LSRM-best outperforms the QRM and QRM+RS baselines except for the first phase, where there is no previous knowledge to be transferred. LSRM-worst performs slightly worse than LSRM-best, but still outperforms QRM baseline since it still leverages some kind of knowledge from previous tasks, even if this knowledge transfer might not be optimal. 
% Using the smallest representations improves the performance of LSRM in the \textsc{MineCraft} domain, where the tasks are more complex and the representations should be selected more carefully.  However, the reward shaping method has negative impact on LSRM, since it is sensitive to the initialization of Q-functions. 

\begin{figure*}[ht]
    \centering
    \subfigure[Results in the \textsc{OfficeWorld} domain.]{\label{fig:E3office}
    \includegraphics[width=0.8\linewidth]{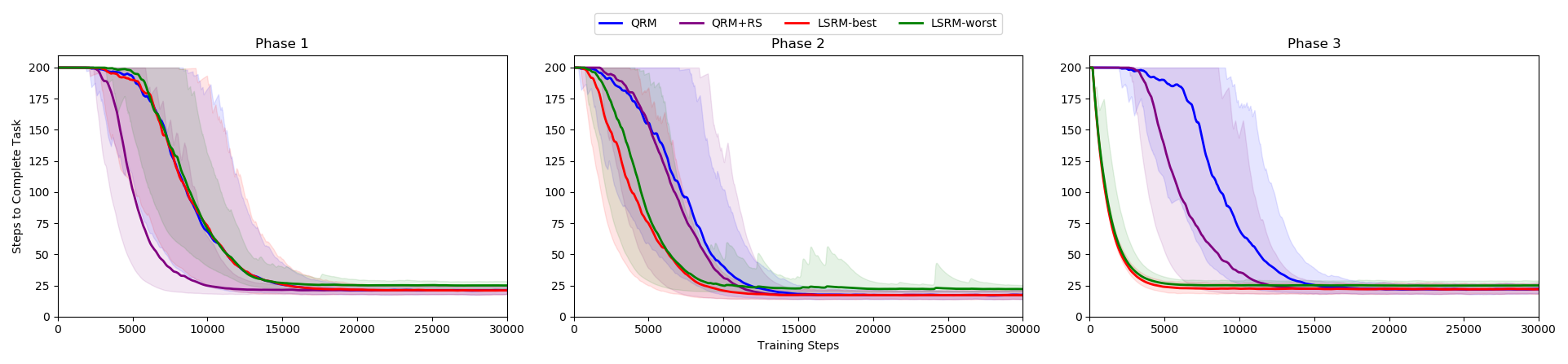}}
    \subfigure[Results in the \textsc{MineCraft} domain.]{\label{fig:E3craft}
    \includegraphics[width=0.8\linewidth]{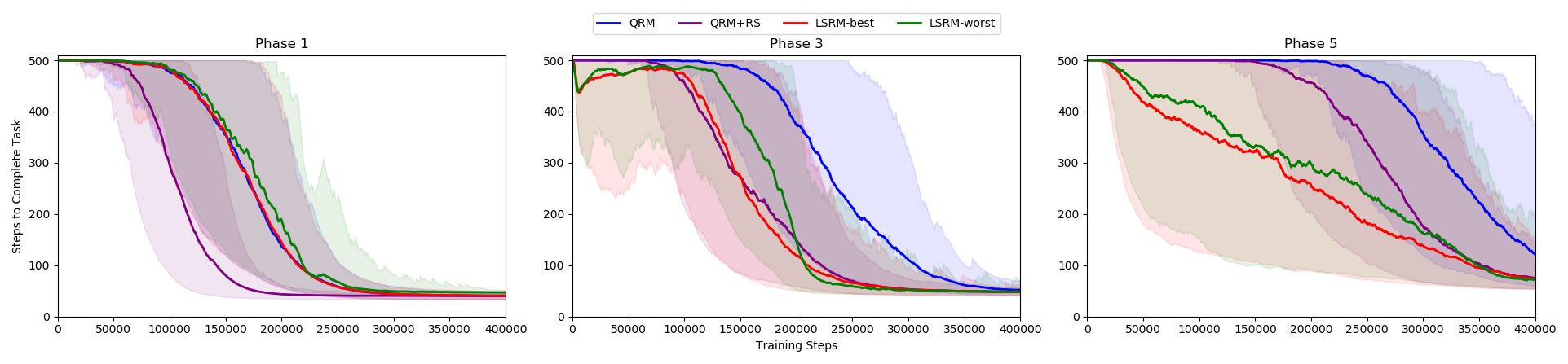}}
    \caption{Evaluations of the LSRM in each phase in the \textsc{OfficeWorld} domain, and phase 1,3,5 in the \textsc{MineCraft} domain. }
    \label{fig:evaluation LSRM}
\end{figure*}

\section{Related Work}

% lifelong learning
Lifelong learning (or continual learning, multi-task learning) has received a rising interest in recent years, due to its potential to reduce agents' training time in dynamic environments. 
%Compared to the vast literature of lifelong learning in deep learning community \cite{li2017learning}, LRL is relatively less studied.  
Abel \textit{et al.}~\cite{abel2018policy} proposed a transfer method to realize optimal initialization of an agent’s policy or value function in LRL. Garcia \textit{et al.}~\cite{garcia2019meta} leveraged advice from previously learned tasks to enable more efficient exploration in target tasks. Ammar \textit{et al.}~\cite{ammar2015autonomous} proposed an LRL algorithm that supports efficient cross-domain transfer between tasks from different domains. An option-discovery method \cite{brunskill2014pac} was proposed to facilitate learning by transferring the options with high sample efficiency. Other studies resorted to techniques including active learning with network consolidation (compression)  \cite{schwarz2018progress}, sub-network freezing \cite{rusu2016progressive}, or rehearsal of old data via experience replay \cite{isele2018selective,rolnick2018experience}. However, these works normally assume that the series of tasks share some similar structure in MDP components such as rewards or transition probabilities, therefore it is hard for them to handle non-Markovian logically specified tasks as we did in this paper. 

%Some studies  \cite{tasse2020boolean,tasse2020logical,van2019composing} also investigated logical task composition in LRL. However, their focus is on the composition effectiveness given particular logical operators, while we address the automatic decomposition and knowledge transfer in learning more complex sequential tasks that are specified by LTL. 

% transfer learning+temporal logic
%Imitation learning enables users to specify tasks by providing demonstrations of the desired task [18, 1, 26, 20, 10]. However, in many settings, it may be easier for the user to directly specify the task—e.g., when programming a warehouse robot, it may be easier to specify waypoints describing paths the robot should take than to manually drive the robot to obtain demonstrations \cite{zhang2020survey}

Formal language such as LTL allows the user to specify task constraints over sequences of events happening over time. A large number of studies have incorporated the rich expressive power of logic formulas into RL. Icarte \textit{et al.}~\cite{icarte2017using} considered the use of advice expressed in LTL to guide exploration in a model-based RL algorithm. Bozkurt \textit{et al.}~\cite{bozkurt2020control} applied  LTL objectives to robotic learning. Li \textit{et al.} proposed a formal method approach to RL that makes the reward generation process interpretable~\cite{li2019formal}, and proposed a variant of LTL to specify a reward function that can be optimized using RL~\cite{li2017reinforcement}. De Giacomo \textit{et al.}~\cite{de2019foundations,de2020imitation} leveraged LTL as constraints (i.e., restraining bolts) in RL. Gao \textit{et al.}~\cite{gao2019reduced} 
used LTL to specify the unknown transition probabilities for RL. However, all these studies do not target at LRL problems.  

A number of studies have considered as well logic formulas for more efficient skill composition and knowledge transfer in RL. Yuan \textit{et al.}~\cite{yuan2019modular} proposed a modular RL approach to satisfy one-shot TL specifications in continuous state and action spaces. Li  \textit{et al.}~\cite{li2017automata} also used the modularity of TL and automata together with hierarchical RL for skill composition to generalise from trained sub-tasks to complex specifications. Xu  \textit{et al.}~\cite{xu2019transfer} proposed the \textit{Metric Interval Temporal Logic} (MITL)  to specify temporal tasks and transfer knowledge between \textit{logically transferable} tasks.  
Leon \textit{et al.}~\cite{leon2020systematic} utilized an extended observation to be fed into the network to encode new high-level knowledge.
%The concept of \textit{logically transferable} was proposed to measure similarity between temporal tasks. 
The authors in~\cite{jothimurugan2019composable} proposed a formal language for specifying complex control tasks.
%, and compiled task specifications into a reward function in continuous state and action for automatic reward shaping. 
Other works reveal the relationships between tasks by constructing \textit{Markov Logic Network} (MLN) \cite{torrey2009policy,mihalkova2007mapping} and investigate the composition effectiveness given particular logical operators \cite{tasse2020boolean,tasse2020logical,van2019composing}. Compared to these methods that the source of sub-tasks should be specified a priori and the focus is on bottom-up composition effectiveness, our work is able to learn and decompose logically complex tasks automatically in a lifelong memory.

Finally, RM has been used to solve problems from various perspectives in RL, such as robotics training \cite{camacho2020disentangled}, encoding a team's task in multi-agent systems \cite{neary2020reward} and solving partially observable RL problems \cite{icarte2019learning}. Unlike these existing studies, our work features a dynamic growing RM for continuous learning of future target formulas.

%\cite{de2020temporal} llowing a reward designer to assign rewards based on temporary/permanent satisfaction of a temporal formula.

\section{Conclusion}
This paper presents an LRL method that takes advantage of the rich expressive power of temporal logic formulas for more flexible task specification and decomposition, and the knowledge transfer capability of RM for more efficient learning. 
Through storing and leveraging high-level knowledge in a memory, LSRM is able to achieve systematic out-of-distribution generalisation in tasks that follow the specifications of formal languages.
%, thus addressing the challenging issue of \textit{catastrophic forgetting} in lifelong learning~\cite{li2017learning}. 
Results in two benchmark domains show that LSRM improves the sample efficiency by a large margin compared to direct learning methods.
In this paper, we evaluate the proposed method in grid-world domains with discrete actions.
In the future, we will extend our approach to more complex domains, \textit{e.g.}, with continuous states and/or actions. 

%We have proposed a language for formally specifying control tasks and an algorithm to learn policies to perform tasks specified in the language. Our algorithm first constructs a task monitor from the given specification, and then uses the task monitor to assign shaped rewards to runs of the system. Furthermore, the monitor state is also given as input to the controller, which enables our algorithm to learn policies for non-Markovian specifications. Finally, we implemented our approach in a tool called SPECTRL, which enables the users to program what the agent needs to do at a high level; then, it automatically learns a policy that tries to best satisfy the user intent. We also demonstrate that SPECTRL can be used to learn policies for complex specifications, and that it can outperform state-of-the-art baselines.

% \begin{quote}
% \noindent AAAI Press\\
% 2275 East Bayshore Road, Suite 160\\
% Palo Alto, California 94303\\
% \textit{Telephone:} (650) 328-3123\\
% \textit{E-mail:} See the submission instructions for your particular conference or event.
% \end{quote}

%\bibliography{aaai22.bib}

\begin{thebibliography}{38}
\providecommand{\natexlab}[1]{#1}

\bibitem[{Abel et~al.(2018)Abel, Jinnai, Guo, Konidaris, and
  Littman}]{abel2018policy}
Abel, D.; Jinnai, Y.; Guo, S.~Y.; Konidaris, G.; and Littman, M. 2018.
\newblock Policy and value transfer in lifelong reinforcement learning.
\newblock In \emph{International Conference on Machine Learning}, 20--29. PMLR.

\bibitem[{Ammar et~al.(2015)Ammar, Eaton, Luna, and
  Ruvolo}]{ammar2015autonomous}
Ammar, H.~B.; Eaton, E.; Luna, J.~M.; and Ruvolo, P. 2015.
\newblock Autonomous cross-domain knowledge transfer in lifelong policy
  gradient reinforcement learning.
\newblock In \emph{Twenty-Fourth International Joint Conference on Artificial
  Intelligence}.

\bibitem[{Andreas, Klein, and Levine(2017)}]{andreas2017modular}
Andreas, J.; Klein, D.; and Levine, S. 2017.
\newblock Modular multitask reinforcement learning with policy sketches.
\newblock In \emph{International Conference on Machine Learning}, 166--175.
  PMLR.

\bibitem[{Bacchus and Kabanza(2000)}]{bacchus2000using}
Bacchus, F.; and Kabanza, F. 2000.
\newblock Using temporal logics to express search control knowledge for
  planning.
\newblock \emph{Artificial intelligence}, 116(1-2): 123--191.

\bibitem[{Bozkurt et~al.(2020)Bozkurt, Wang, Zavlanos, and
  Pajic}]{bozkurt2020control}
Bozkurt, A.~K.; Wang, Y.; Zavlanos, M.~M.; and Pajic, M. 2020.
\newblock Control synthesis from linear temporal logic specifications using
  model-free reinforcement learning.
\newblock In \emph{2020 IEEE International Conference on Robotics and
  Automation (ICRA)}, 10349--10355. IEEE.

\bibitem[{Brunskill and Li(2014)}]{brunskill2014pac}
Brunskill, E.; and Li, L. 2014.
\newblock Pac-inspired option discovery in lifelong reinforcement learning.
\newblock In \emph{International conference on machine learning}, 316--324.
  PMLR.

\bibitem[{Camacho et~al.(2019)Camacho, Icarte, Klassen, Valenzano, and
  McIlraith}]{camacho2019ltl}
Camacho, A.; Icarte, R.~T.; Klassen, T.~Q.; Valenzano, R.~A.; and McIlraith,
  S.~A. 2019.
\newblock LTL and Beyond: Formal Languages for Reward Function Specification in
  Reinforcement Learning.
\newblock In \emph{IJCAI}, volume~19, 6065--6073.

\bibitem[{Camacho et~al.(2020)Camacho, Varley, Jain, Iscen, and
  Kalashnikov}]{camacho2020disentangled}
Camacho, A.; Varley, J.; Jain, D.; Iscen, A.; and Kalashnikov, D. 2020.
\newblock Disentangled Planning and Control in Vision Based Robotics via Reward
  Machines.
\newblock \emph{arXiv preprint arXiv:2012.14464}.

\bibitem[{De~Giacomo et~al.(2020)De~Giacomo, Favorito, Iocchi, and
  Patrizi}]{de2020imitation}
De~Giacomo, G.; Favorito, M.; Iocchi, L.; and Patrizi, F. 2020.
\newblock Imitation learning over heterogeneous agents with restraining bolts.
\newblock In \emph{Proceedings of the International Conference on Automated
  Planning and Scheduling}, volume~30, 517--521.

\bibitem[{De~Giacomo et~al.(2019)De~Giacomo, Iocchi, Favorito, and
  Patrizi}]{de2019foundations}
De~Giacomo, G.; Iocchi, L.; Favorito, M.; and Patrizi, F. 2019.
\newblock Foundations for restraining bolts: Reinforcement learning with
  LTLf/LDLf restraining specifications.
\newblock In \emph{Proceedings of the International Conference on Automated
  Planning and Scheduling}, volume~29, 128--136.

\bibitem[{Gao et~al.(2019)Gao, Hajinezhad, Zhang, Kantaros, and
  Zavlanos}]{gao2019reduced}
Gao, Q.; Hajinezhad, D.; Zhang, Y.; Kantaros, Y.; and Zavlanos, M.~M. 2019.
\newblock Reduced variance deep reinforcement learning with temporal logic
  specifications.
\newblock In \emph{Proceedings of the 10th ACM/IEEE International Conference on
  Cyber-Physical Systems}, 237--248.

\bibitem[{Garcia and Thomas(2019)}]{garcia2019meta}
Garcia, F.~M.; and Thomas, P.~S. 2019.
\newblock A meta-MDP approach to exploration for lifelong reinforcement
  learning.
\newblock \emph{arXiv preprint arXiv:1902.00843}.

\bibitem[{Icarte et~al.(2018)Icarte, Klassen, Valenzano, and
  McIlraith}]{icarte2018using}
Icarte, R.~T.; Klassen, T.; Valenzano, R.; and McIlraith, S. 2018.
\newblock Using reward machines for high-level task specification and
  decomposition in reinforcement learning.
\newblock In \emph{ICML}, 2107--2116.

\bibitem[{Icarte et~al.(2017)Icarte, Klassen, Valenzano, and
  McIlraith}]{icarte2017using}
Icarte, R.~T.; Klassen, T.~Q.; Valenzano, R.; and McIlraith, S.~A. 2017.
\newblock Using advice in model-based reinforcement learning.
\newblock In \emph{The 3rd Multidisciplinary Conference on Reinforcement
  Learning and Decision Making (RLDM)}.

\bibitem[{Icarte et~al.(2019)Icarte, Waldie, Klassen, Valenzano, Castro, and
  McIlraith}]{icarte2019learning}
Icarte, R.~T.; Waldie, E.; Klassen, T.; Valenzano, R.; Castro, M.; and
  McIlraith, S. 2019.
\newblock Learning reward machines for partially observable reinforcement
  learning.
\newblock In \emph{NeurIPS}, 15523--15534.

\bibitem[{Isele and Cosgun(2018)}]{isele2018selective}
Isele, D.; and Cosgun, A. 2018.
\newblock Selective experience replay for lifelong learning.
\newblock In \emph{Proceedings of the AAAI Conference on Artificial
  Intelligence}, volume~32.

\bibitem[{Jothimurugan, Alur, and Bastani(2019)}]{jothimurugan2019composable}
Jothimurugan, K.; Alur, R.; and Bastani, O. 2019.
\newblock A Composable Specification Language for Reinforcement Learning Tasks.
\newblock In \emph{Advances in Neural Information Processing Systems 32: Annual
  Conference on Neural Information Processing Systems 2019, NeurIPS 2019}.

\bibitem[{Leon, Shanahan, and Belardinelli(2020)}]{leon2020systematic}
Leon, B.~G.; Shanahan, M.; and Belardinelli, F. 2020.
\newblock Systematic Generalisation through Task Temporal Logic and Deep
  Reinforcement Learning.
\newblock \emph{arXiv preprint arXiv:2006.08767}.

\bibitem[{Li, Ma, and Belta(2017)}]{li2017automata}
Li, X.; Ma, Y.; and Belta, C. 2017.
\newblock Automata-guided hierarchical reinforcement learning for skill
  composition.
\newblock \emph{arXiv preprint arXiv:1711.00129}.

\bibitem[{Li et~al.(2019)Li, Serlin, Yang, and Belta}]{li2019formal}
Li, X.; Serlin, Z.; Yang, G.; and Belta, C. 2019.
\newblock A formal methods approach to interpretable reinforcement learning for
  robotic planning.
\newblock \emph{Science Robotics}, 4(37).

\bibitem[{Li, Vasile, and Belta(2017)}]{li2017reinforcement}
Li, X.; Vasile, C.-I.; and Belta, C. 2017.
\newblock Reinforcement learning with temporal logic rewards.
\newblock In \emph{2017 IEEE/RSJ International Conference on Intelligent Robots
  and Systems (IROS)}, 3834--3839. IEEE.

\bibitem[{Linz(2006)}]{linz2006introduction}
Linz, P. 2006.
\newblock \emph{An introduction to formal languages and automata}.
\newblock Jones \& Bartlett Learning.

\bibitem[{Mihalkova, Huynh, and Mooney(2007)}]{mihalkova2007mapping}
Mihalkova, L.; Huynh, T.; and Mooney, R.~J. 2007.
\newblock Mapping and revising markov logic networks for transfer learning.
\newblock In \emph{Aaai}, volume~7, 608--614.

\bibitem[{Neary et~al.(2020)Neary, Xu, Wu, and Topcu}]{neary2020reward}
Neary, C.; Xu, Z.; Wu, B.; and Topcu, U. 2020.
\newblock Reward machines for cooperative multi-agent reinforcement learning.
\newblock \emph{arXiv preprint arXiv:2007.01962}.

\bibitem[{Pnueli(1977)}]{pnueli1977temporal}
Pnueli, A. 1977.
\newblock The temporal logic of programs.
\newblock In \emph{18th Annual Symposium on Foundations of Computer Science
  (sfcs 1977)}, 46--57. IEEE.

\bibitem[{Rolnick et~al.(2018)Rolnick, Ahuja, Schwarz, Lillicrap, and
  Wayne}]{rolnick2018experience}
Rolnick, D.; Ahuja, A.; Schwarz, J.; Lillicrap, T.~P.; and Wayne, G. 2018.
\newblock Experience replay for continual learning.
\newblock \emph{arXiv preprint arXiv:1811.11682}.

\bibitem[{Rusu et~al.(2016)Rusu, Rabinowitz, Desjardins, Soyer, Kirkpatrick,
  Kavukcuoglu, Pascanu, and Hadsell}]{rusu2016progressive}
Rusu, A.~A.; Rabinowitz, N.~C.; Desjardins, G.; Soyer, H.; Kirkpatrick, J.;
  Kavukcuoglu, K.; Pascanu, R.; and Hadsell, R. 2016.
\newblock Progressive neural networks.
\newblock \emph{arXiv preprint arXiv:1606.04671}.

\bibitem[{Schwarz et~al.(2018)Schwarz, Czarnecki, Luketina, Grabska-Barwinska,
  Teh, Pascanu, and Hadsell}]{schwarz2018progress}
Schwarz, J.; Czarnecki, W.; Luketina, J.; Grabska-Barwinska, A.; Teh, Y.~W.;
  Pascanu, R.; and Hadsell, R. 2018.
\newblock Progress \& compress: A scalable framework for continual learning.
\newblock In \emph{International Conference on Machine Learning}, 4528--4537.
  PMLR.

\bibitem[{Sutton and Barto(2018)}]{sutton2018reinforcement}
Sutton, R.~S.; and Barto, A.~G. 2018.
\newblock \emph{Reinforcement learning: An introduction}.
\newblock MIT press.

\bibitem[{Tasse, James, and Rosman(2020{\natexlab{a}})}]{tasse2020boolean}
Tasse, G.~N.; James, S.; and Rosman, B. 2020{\natexlab{a}}.
\newblock A boolean task algebra for reinforcement learning.
\newblock \emph{arXiv preprint arXiv:2001.01394}.

\bibitem[{Tasse, James, and Rosman(2020{\natexlab{b}})}]{tasse2020logical}
Tasse, G.~N.; James, S.; and Rosman, B. 2020{\natexlab{b}}.
\newblock Logical Composition in Lifelong Reinforcement Learning.

\bibitem[{Toro~Icarte et~al.(2018)Toro~Icarte, Klassen, Valenzano, and
  McIlraith}]{toro2018teaching}
Toro~Icarte, R.; Klassen, T.~Q.; Valenzano, R.; and McIlraith, S.~A. 2018.
\newblock Teaching multiple tasks to an RL agent using LTL.
\newblock In \emph{AAMAS}, 452--461.

\bibitem[{Toro~Icarte et~al.(2020)Toro~Icarte, Klassen, Valenzano, and
  McIlraith}]{toro2020reward}
Toro~Icarte, R.; Klassen, T.~Q.; Valenzano, R.; and McIlraith, S.~A. 2020.
\newblock Reward Machines: Exploiting Reward Function Structure in
  Reinforcement Learning.
\newblock \emph{arXiv e-prints}, arXiv--2010.

\bibitem[{Torrey and Shavlik(2009)}]{torrey2009policy}
Torrey, L.; and Shavlik, J. 2009.
\newblock Policy transfer via Markov logic networks.
\newblock In \emph{International Conference on Inductive Logic Programming},
  234--248. Springer.

\bibitem[{Van~Niekerk et~al.(2019)Van~Niekerk, James, Earle, and
  Rosman}]{van2019composing}
Van~Niekerk, B.; James, S.; Earle, A.; and Rosman, B. 2019.
\newblock Composing value functions in reinforcement learning.
\newblock In \emph{International Conference on Machine Learning}, 6401--6409.
  PMLR.

\bibitem[{Watkins and Dayan(1992)}]{watkins1992q}
Watkins, C.~J.; and Dayan, P. 1992.
\newblock Q-learning.
\newblock \emph{Machine learning}, 8(3-4): 279--292.

\bibitem[{Xu and Topcu(2019)}]{xu2019transfer}
Xu, Z.; and Topcu, U. 2019.
\newblock Transfer of Temporal Logic Formulas in Reinforcement Learning.
\newblock In \emph{IJCAI: proceedings of the conference}, volume~28,
  4010--4018.

\bibitem[{Yuan et~al.(2019)Yuan, Hasanbeig, Abate, and
  Kroening}]{yuan2019modular}
Yuan, L.~Z.; Hasanbeig, M.; Abate, A.; and Kroening, D. 2019.
\newblock Modular deep reinforcement learning with temporal logic
  specifications.
\newblock \emph{arXiv preprint arXiv:1909.11591}.

\end{thebibliography}

\appendix

\section{Full Definitions and Proofs of SLTL}
The syntax of SLTL formulas differs from LTL in terms of the \textit{``then''} operator, \textit{i.e.,}
\begin{equation}
    \varphi:=p\mid \neg \varphi \mid \varphi\land\psi \mid \bigcirc\varphi \mid \varphi\until\psi \mid \varphi\sim\psi,\quad p\in \P.
\end{equation}
% The semantic of SLTL are defined over \textit{label sequences}, where each \textit{label} $l_i\subseteq \P$ is a set of propositions that hold true in the environment at the time $t=i$. 
% For example, in the \textsc{OfficeWorld} domain, the agent gets coffee and mail at time $t=i$, then a label $l_i=\{c,m\}$ is detected. 
% 
% When an episode terminates, a finite label sequence is detected by the agent, says $\lambda=l_0l_1\cdots l_k$. If the value of an SLTL formula $\varphi$ is true determined by $\lambda$, \textit{i.e.} the agent completes the task specified by $\varphi$, we denote it by $\lambda\models\varphi$. Otherwise, we denote $\lambda\not\models\varphi$. It is formally defined as follows:
The semantic of the operator \textit{``then''} is defined as
\begin{align}
\lambda\models \varphi\sim\psi 
\iff 
\begin{cases}
\exists j\geq 0,\lambda^{0:j}\models\varphi,\lambda^{j+1}\models \psi, \\
\qquad\text{and }\forall i<j,\lambda^{0:i}\not\models \varphi, &\text{if }\varphi\neq\true;\\
\lambda\models \psi, &\text{if } \varphi=\true,
\end{cases}  
\end{align} 
where the notation $\lambda^{i:j}$ denotes the sub sequence $l_il_{i+1}\cdots l_j$.
% The definition above implies that if an agent completes a sequential task \textit{``task 1 then task 2''}, it needs to complete task 1 at first, then starts to complete task 2.

% If $i>k$ or $i>j$, we let $\lambda^i=\epsilon$ or $\lambda^{i:j}=\epsilon$, where $\epsilon$ is the empty label sequence. We also define $\epsilon\models\true$ and $\epsilon\not\models \varphi$ for all $\varphi\neq \true$.

The SLTL progression of formulas composed by the \textit{``then''} operator is defined as  
\begin{equation}
\prog(\varphi\sim\psi,l)=
  \begin{cases}
  \prog(\varphi,l)\sim \psi,\quad &\text{if } \varphi\neq\true;\\
  \prog(\psi,l),\quad &\text{if }\varphi=\true,
  \end{cases}
\end{equation}
where $\varphi,\psi$ are SLTL formulas and $l$ is a label.
% Progression of formulas including other operators can be induced by their definition. For example, from the definition of $\lor$ and $\Diamond$, we have $\prog(\varphi\lor\psi,l)=\prog(\varphi,l)\lor\prog(\psi,l)$ and $\prog(\Diamond\varphi,l)=\prog(\varphi,l)\lor \Diamond\varphi$.
The following theorem theoretically ensures that the progression of SLTL is well defined.
\begin{theorem}
\label{th:ltl progression}
For all SLTL formula $\varphi$ and a label sequence denoted by $\lambda$, we have $\lambda^i\models \varphi$ if and only if $\lambda^{i+1}\models \prog(\varphi,l_i)$ for all $i\geq 0$.
\end{theorem}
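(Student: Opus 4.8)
The plan is to prove the statement by structural induction on the SLTL formula $\varphi$, after first reducing to the case $i=0$. Because the clause defining $\lambda\models\varphi_1\sim\varphi_2$ refers only to prefixes and suffixes taken relative to position $0$, the equivalence at an arbitrary index $i$ is obtained by applying the $i=0$ statement to the shifted sequence $\lambda^i$ (read as a fresh label sequence whose initial label is $l_i$): this yields $\lambda^i\models\varphi\iff\lambda^{i+1}\models\prog(\varphi,l_i)$. Hence it suffices to establish $\lambda\models\varphi\iff\lambda^1\models\prog(\varphi,l_0)$.

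For the Boolean and classical temporal operators $p,\neg,\land,\bigcirc,\until$ the SLTL progression rules coincide verbatim with the LTL ones and the semantic clauses are unchanged, so these cases are precisely those established for LTL progression by Bacchus and Kabanza; I would either cite that result directly or replay the short unfolding argument (for instance, expanding $\prog(\varphi_1\until\varphi_2,l_0)$ and matching it term by term against the semantic definition of $\lambda\models\varphi_1\until\varphi_2$). The only genuinely new case, and the one carrying the content of the theorem, is $\varphi=\varphi_1\sim\varphi_2$.

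For this case I would first split on whether $\varphi_1=\true$. If $\varphi_1=\true$, then $\prog(\true\sim\varphi_2,l_0)=\prog(\varphi_2,l_0)$ and, by definition, $\lambda\models\true\sim\varphi_2\iff\lambda\models\varphi_2$, so the goal reduces exactly to the induction hypothesis for $\varphi_2$. If $\varphi_1\neq\true$, then $\prog(\varphi_1\sim\varphi_2,l_0)=\prog(\varphi_1,l_0)\sim\varphi_2$, and the semantics is driven by the \emph{first} index $j$ at which the prefix $\lambda^{0:j}$ satisfies $\varphi_1$. To bridge the two sides I would introduce and separately prove, by a parallel structural induction, a \emph{finite-prefix} one-step progression lemma: for every SLTL formula $\chi$ and every $j\geq 1$, $\lambda^{0:j}\models\chi\iff\lambda^{1:j}\models\prog(\chi,l_0)$, together with the boundary fact $\lambda^{0:0}\models\chi\iff\prog(\chi,l_0)=\true$ (under the usual convention that progression outputs are simplified, so a completed formula reduces syntactically to $\true$). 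Applying this with $\chi=\varphi_1$ translates the minimal completion index $j$ of $\varphi_1$ along $\lambda$ into the minimal completion index $j-1$ of $\prog(\varphi_1,l_0)$ along $\lambda^1$, or, when $j=0$, into $\prog(\varphi_1,l_0)=\true$, which collapses the right-hand side to $\lambda^1\models\varphi_2$; meanwhile the continuation requirement $\lambda^{j+1}\models\varphi_2$ sits at the same absolute position on both sides. Matching the two descriptions clause by clause, and verifying that the minimality quantifier $\forall i<j,\ \lambda^{0:i}\not\models\varphi_1$ is preserved under the single shift, gives both directions.

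I expect the main obstacle to be precisely this $\sim$ case: reconciling the commit-to-first-completion semantics with the purely local, one-step progression rule, and correctly handling the boundary $j=0$, where $\varphi_1$ is completed immediately and the residual $\prog(\varphi_1,l_0)$ collapses to $\true$. This is what forces the auxiliary finite-prefix lemma, whose own $\sim$-subcase must be discharged by the same induction; I would therefore order the two inductions carefully (the finite-prefix lemma first, then the infinite-sequence theorem that invokes it) to rule out any circular dependence, and would make the simplification convention explicit so that ``$\varphi_1$ completed'' and ``$\prog(\varphi_1,l_0)=\true$'' are genuinely interchangeable.
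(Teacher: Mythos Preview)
Your approach is essentially the same as the paper's: both defer the Boolean and classical temporal cases to the known LTL result and concentrate on the new $\sim$ operator, splitting on whether $\varphi_1=\true$. The paper's proof is a two-line computation that directly asserts $\lambda^{i:j}\models\varphi_1\iff\lambda^{i+1:j}\models\prog(\varphi_1,l_i)$ (and the corresponding negated version for the minimality quantifier) without isolating this as a separate fact and without treating the boundary $j=i$ (your $j=0$) where the prefix collapses to a single label and $\prog(\varphi_1,l_i)$ must be read as $\true$. Your explicit finite-prefix progression lemma is exactly what the paper is using tacitly; making it a stand-alone induction and ordering it before the main theorem is a genuine improvement in rigor, not a different method.
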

\begin{proof}
It has been proved for LTL formulas. Therefore, we only need to prove that $\lambda^i\models \varphi\sim\psi\iff \lambda^{i+1}\models \prog(\varphi\sim\psi,l_i)$. If $\varphi_1\neq \true$, then $\lambda^i\models \varphi_1\sim \varphi_2 \iff \exists j\geq i,\lambda^{i:j}\models \varphi_1,\lambda^{j+1}\models \varphi_2, \forall t<j, \lambda^{i:t}\not\models\varphi_1 \iff \lambda^{i+1:j}\models \prog(\varphi_1,l_i)$ and $\lambda^j\models\varphi_2, \forall  t<j, \lambda^{i+1:t}\not\models\prog(\varphi_1,l_i)\iff \lambda^{i+1}\models \prog(\varphi_1,l_i)\sim\varphi_2=\prog(\varphi_1\sim\varphi_2,l_i)
$. If $\varphi_1=\true$, then $\lambda^i\models \varphi_1\sim \varphi_2=\varphi_2 \iff \lambda^{i+1}\models \prog(\varphi_2,l_i)=\prog(\varphi_1\sim\varphi_2,l_i)$.
\end{proof}

% We should note that not all the SLTL formulas describe a task rationally. For example, $\neg\Diamond *$ describe the task \textit{``never touch the furniture''} that can only be broke down without a positive goal for the agent. 
% The formula $(A\sim \Diamond B)\until C$ derives infinite amount of progressed formulas, not suitable for generating an RM with finite states. Therefore, we restrict our consideration to \textit{rational} SLTL formulas, representing finitely achievable tasks for the agent.

% \begin{define}
% An SLTL formula $\varphi$ is said to be rational, if there exists finite label sequence $\lambda_0$ such that $\prog(\varphi,\lambda_0)=\true$ and all the progressed formulas $\{\prog(\varphi,\lambda)\mid \lambda\in \Sigma^*\}$ is a finite set, where the notation $\prog(\varphi,\lambda)$ means $\prog(\cdots,\prog(\prog(\varphi,l_0),l_1),\cdots,l_k)$ for $\lambda=l_0l_1\cdots l_k$.
% \end{define}

Finally, we give the proofs of operator laws in the following.

\begin{theorem}
For any SLTL formulas $\varphi_1,\varphi_2,\varphi_3$, we have associative law: $(\varphi_1\sim\varphi_2)\sim\varphi_3=\varphi_1\sim(\varphi_2\sim\varphi_3)$, and distribution laws (i)$(\varphi_1\sim\varphi_2)\land(\varphi_1\sim\varphi_3)=\varphi_1\sim(\varphi_2\land\varphi_3)$; (ii)$(\varphi_1\sim\varphi_3)\land(\varphi_2\sim\varphi_3)=(\varphi_1\land\varphi_2)\sim\varphi_3$; (iii)$(\varphi_1\sim\varphi_2)\lor(\varphi_1\sim\varphi_3)=\varphi_1\sim(\varphi_2\lor\varphi_3)$; and (iv)$(\varphi_1\sim\varphi_3)\lor(\varphi_2\sim\varphi_3)=(\varphi_1\lor\varphi_2)\sim\varphi_3$.
\end{theorem}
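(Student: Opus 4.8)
The plan is to treat each identity as a statement of \emph{semantic equivalence}: two SLTL formulas are equal exactly when they are satisfied by the same label sequences, so it suffices to show $\lambda \models \text{LHS} \iff \lambda \models \text{RHS}$ for an arbitrary $\lambda$ (and, since the laws are meant to be applied as rewrites inside larger formulas, for every suffix $\lambda^i$, which follows by running the same argument at position $i$). The central device is the observation that the defining clause of $\varphi \sim \psi$ pins down a \emph{unique first-completion index}: for $\varphi \neq \true$ there is at most one $j$ with $\lambda^{0:j} \models \varphi$ and $\lambda^{0:i} \not\models \varphi$ for all $i < j$. I would abbreviate this index as $j_\varphi$ and read $\lambda \models \varphi \sim \psi$ as ``$j_\varphi$ exists and $\lambda^{j_\varphi + 1} \models \psi$''. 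Throughout, the branch $\varphi = \true$ must be split off and handled by the second case of the semantics, so every step carries a short $\true$-versus-non-$\true$ case analysis.

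For the associative law I would use the \emph{staged reading}: a chain of $\sim$ operators cuts $\lambda$ into consecutive blocks at successive first-completion boundaries. Unfolding $(\varphi_1 \sim \varphi_2) \sim \varphi_3$ exposes the first-completion index of $\varphi_1 \sim \varphi_2$, which I would show coincides with ``finish $\varphi_2$ on the suffix that begins right after $\varphi_1$ is first finished''; unfolding $\varphi_1 \sim (\varphi_2 \sim \varphi_3)$ yields the same three-stage decomposition of $\lambda$ into a $\varphi_1$-block, a $\varphi_2$-block and a $\varphi_3$-tail, and matching the two decompositions index-by-index gives the equivalence. The left-factored distribution laws (i) and (iii) are the cleanest: because $\varphi_1$ is the shared left operand, its first-completion index $j_{\varphi_1}$ is the \emph{same} on both sides, so each law reduces to the pointwise Boolean fact that $\lambda^{j_{\varphi_1}+1} \models \varphi_2 \land \varphi_3$ (resp.\ $\varphi_2 \lor \varphi_3$) iff $\lambda^{j_{\varphi_1}+1}$ satisfies the two operands in the corresponding conjunctive or disjunctive combination.

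The hard part will be the right-factored laws (ii) and (iv), where the shared operand $\varphi_3$ sits on the right but the two left operands $\varphi_1$ and $\varphi_2$ differ and hence carry \emph{different} first-completion indices $j_{\varphi_1}, j_{\varphi_2}$, while $\varphi_1 \land \varphi_2$ (resp.\ $\varphi_1 \lor \varphi_2$) has a first-completion index of its own. The crux is relating these three indices, which I expect to require a \emph{prefix-monotonicity lemma} --- once a prefix completes a formula, every longer prefix does --- from which I would deduce that the first-completion index of $\varphi_1 \land \varphi_2$ is $\max(j_{\varphi_1}, j_{\varphi_2})$ and that of $\varphi_1 \lor \varphi_2$ is $\min(j_{\varphi_1}, j_{\varphi_2})$. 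The delicate point, and the step I expect to fight with, is reconciling the suffix condition on $\varphi_3$ when it is tested at these different indices: I would need to verify that the pair of tests ``$\lambda^{j_{\varphi_1}+1} \models \varphi_3$ and $\lambda^{j_{\varphi_2}+1} \models \varphi_3$'' really collapses to the single test at the combined index, which is precisely where any hidden hypothesis on the formulas (or on how $\varphi_3$ is evaluated) would have to be surfaced and made explicit.

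As a cross-check, and to discharge the many $\true$ edge cases uniformly, I would rerun the whole argument through \emph{progression}: by Theorem~\ref{th:ltl progression}, $\lambda^i \models \varphi \iff \lambda^{i+1} \models \prog(\varphi, l_i)$, so two formulas are equivalent once their one-step progressions agree for every label. The rule $\prog(\varphi \sim \psi, l) = \prog(\varphi,l) \sim \psi$ for $\varphi \neq \true$ is \emph{self-similar} in each law --- for instance $(\varphi_1 \sim \varphi_2) \sim \varphi_3$ progresses to $(\prog(\varphi_1,l) \sim \varphi_2) \sim \varphi_3$, again an instance of the associativity statement --- so a short induction on the length of the consumed prefix, with the $\varphi = \true$ branch as base case, closes the equivalences and independently confirms the index bookkeeping of the direct argument.
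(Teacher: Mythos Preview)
Your direct semantic argument for the associative law and for the left-factored laws (i) and (iii) is essentially what the paper does: the paper unfolds the definitions and, for (i), writes $\min\{i,j\}$ where uniqueness of the first-completion index in fact forces $i=j$ --- exactly your observation. It then asserts that the proofs of (ii)--(iv) are ``similar'', which is fine for (iii) but not for the right-factored laws.

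Your caution about (ii) and (iv) is well placed, and the gap you flag is in fact fatal. First, the prefix-monotonicity lemma you would need does not hold for arbitrary SLTL formulas (any safety formula, e.g.\ $\neg\Diamond p$, is a counterexample). Second, and more damagingly, even when the first-completion indices of $\varphi_1$ and $\varphi_2$ are well defined, the two $\varphi_3$-tests at different suffixes do \emph{not} collapse into one. Concretely, take $\varphi_1=\Diamond a$, $\varphi_2=\Diamond b$, $\varphi_3=c$ and $\lambda=\{a\}\,\{\,\}\,\{b\}\,\{c\}\,\{\,\}\cdots$: then $j_{\varphi_1}=0$ and $j_{\varphi_2}=2$, so $(\varphi_1\land\varphi_2)\sim c$ holds (the test is at index $3$, where $c$ is present) while $\varphi_1\sim c$ fails (the test is at index $1$, where $c$ is absent), refuting (ii); the trace $\{b\}\,\{\,\}\,\{a\}\,\{c\}\cdots$ symmetrically refutes (iv). Your progression cross-check would hit the same obstruction at the step where exactly one of $\varphi_1,\varphi_2$ has progressed to $\true$. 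So the ``hidden hypothesis'' you anticipate really is needed --- as stated in full generality, (ii) and (iv) do not hold, and the paper's one-word proof of them conceals this.
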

\begin{proof}
We first prove the associative law. We have
$\lambda\models (\varphi_1\sim\varphi_2)\sim\varphi_3\iff \exists 0\leq i<j, \lambda^{0:i}\models \varphi_1, \lambda^{i+1:j}\models \varphi_2,\lambda^j\models \varphi_3 \iff \lambda\models \varphi_1\sim(\varphi_2\sim\varphi_3)$. Hence, $(\varphi_1\sim\varphi_2)\sim\varphi_3=\varphi_1\sim(\varphi_2\sim\varphi_3)$.

We then prove the distribution law (i). $\lambda\models (\varphi_1\sim\varphi_2)\land(\varphi_1\sim\varphi_3)\iff \exists i,j\geq 0, \lambda^{0:i}\models \varphi_1, \lambda^{i+1}\models \varphi_2,\lambda^{0:j}\models \varphi_1, \lambda^{j+1}\models \varphi_2$ and$\forall k<\min\{i,j\}, \lambda^{0:k}\not\models \varphi_1 \iff \lambda^{0:\min\{i,j\}}\models \varphi_1, \lambda^{\min\{i,j\}}\models \varphi_2\land\varphi_3 \iff \lambda\models \varphi_1\sim(\varphi_2\land\varphi_3)$. Hence, $(\varphi_1\sim\varphi_2)\land(\varphi_1\sim\varphi_3)=\varphi_1\sim(\varphi_2\land\varphi_3)$. The proofs of (ii)-(iv) are similar.
\end{proof}

\section{Experimental Settings}
The source tasks and target tasks in Section \ref{sec: evaluation of heuristic transferring} are listed in Table \ref{tab:source tasks} and \ref{tab:target tasks}, respectively.

\begin{table*}[ht]
    \centering
    \begin{tabular}{|c||c|c|}
    \hline
    Domain & Source Task & SLTL Formula \\\hline
    \multirow{7}{*}{\textsc{OfficeWorld}} & \textit{deliver coffee to office avoiding furniture} & $\varphi_1=(\neg *\until c)\sim (\neg *\until o)$
    \\\cline{2-3}
    & \textit{deliver mail to office avoiding furniture} & $\varphi_2=(\neg *\until m)\sim (\neg *\until o)$\\\cline{2-3}
    & \textit{go to B then office avoiding furniture} & $\varphi_3=(\neg *\until B)\sim (\neg *\until o)$\\\cline{2-3}
    & \textit{go to B then C avoiding furniture} & $\varphi_4=(\neg *\until B)\sim (\neg *\until C)$\\\cline{2-3}
    & \textit{go to office then B avoiding furniture} & $\varphi_5=(\neg *\until o)\sim (\neg *\until B)$\\\cline{2-3}
    & \textit{get mail then go to D avoiding furniture} & $\varphi_6=(\neg *\until m)\sim (\neg *\until D)$\\\cline{2-3}
    & \textit{get mail then go to A avoiding furniture} & $\varphi_7=(\neg *\until m)\sim (\neg *\until A)$\\\hline\hline
    \multirow{6}{*}{\textsc{MineCraft}} & \textit{make plank} & $\psi_1=(\Diamond a)\sim (\Diamond b)$\\\cline{2-3}
    & \textit{make stick} & $\psi_2=(\Diamond a)\sim (\Diamond c)$\\\cline{2-3}
    & \textit{make cloth} & $\psi_3=(\Diamond d)\sim (\Diamond e)$\\\cline{2-3}
    & \textit{make rope} & $\psi_4=(\Diamond d)\sim (\Diamond b)$ \\\cline{2-3}
    & \textit{make bridge} & $\psi_5=(\Diamond a \land \Diamond f)\sim (\Diamond e)$\\\cline{2-3}
    & \textit{make shears} & $\psi_6=(\Diamond a\land \Diamond f) \sim (\Diamond c)$ \\\hline
    \end{tabular}
    \caption{Source tasks in the two domains.}
    \label{tab:source tasks}
\end{table*}

\begin{table*}[ht]
    \centering
    \begin{tabular}{|c||c|c|}
    \hline
    Domain & Target Task & SLTL Formula \\\hline
    \multirow{7}{*}{\textsc{OfficeWorld}} & \textit{complete source task 1 and 2} & $\varphi_1\land\varphi_2$
    \\\cline{2-3}
    & \textit{complete source task 4 and 6} & $\varphi_4\land\varphi_6$\\\cline{2-3}
    & \textit{complete source task 4 or 5} & $\varphi_4\lor\varphi_5$\\\cline{2-3}
    & \textit{complete source task 4 or 7} & $\varphi_4\lor \varphi_7$\\\cline{2-3}
    & \textit{complete source task 4 then 5} & $\varphi_4\sim \varphi_5$\\\cline{2-3}
    & \textit{complete source task 4 then 6} & $\varphi_4\sim \varphi_6$\\\cline{2-3}
    & \textit{complete source task 5 then 6} & $\varphi_5\sim \varphi_6$\\\hline\hline
    
    \multirow{6}{*}{\textsc{MineCraft}} & \textit{complete source task 1 and 3} & $\psi_1\land\psi_3$
    \\\cline{2-3}
    & \textit{complete source task 4 and 5} & $\psi_4\land\psi_5$\\\cline{2-3}
    & \textit{complete source task 4 and 6} & $\psi_4\land\psi_6$\\\cline{2-3}
    & \textit{complete source task 1 or 3} & $\psi_1\lor\psi_3$\\\cline{2-3}
    & \textit{complete source task 2 or 3} & $\psi_2\lor \psi_3$\\\cline{2-3}
    & \textit{complete source task 2 then 3} & $\psi_2\sim\psi_3$\\\cline{2-3}
    & \textit{complete source task 5 then 6} & $\psi_5\sim\psi_6$\\\hline
    \end{tabular}
    \caption{Target tasks in the two domains.}
    \label{tab:target tasks}
\end{table*}

In Section \ref{sec: evaluation of smallest representation}, the target task is \textit{``delivering coffee and mail to office avoiding furniture''} in the \textsc{OfficeWorld} domain. Its smallest representation is $[(\neg *\until c)\sim (\neg *\until o)]\land[(\neg *\until m)\sim (\neg *\until o)]$, and the other representation is $[(\neg *\until c)\land(\neg *\until m)]\sim (\neg *\until o)$ (see Figure~\ref{fig: representation office}). In the \textsc{MineCraft} domain, the target task is \textit{``making bed''}, and its smallest representation is 
$[(\Diamond a\sim\Diamond b) \land \Diamond d]\sim (\Diamond c)$, while the other representations are $[(\Diamond a\sim \Diamond b)\sim \Diamond c]\land (\Diamond d \sim \Diamond c)$ and $[\Diamond a\sim (\Diamond b\sim \Diamond c)]\land (\Diamond d \sim \Diamond c)$ (see Figure~\ref{fig: representation craft}).

\begin{figure*}[h]
    \centering
    \includegraphics[width=\linewidth]{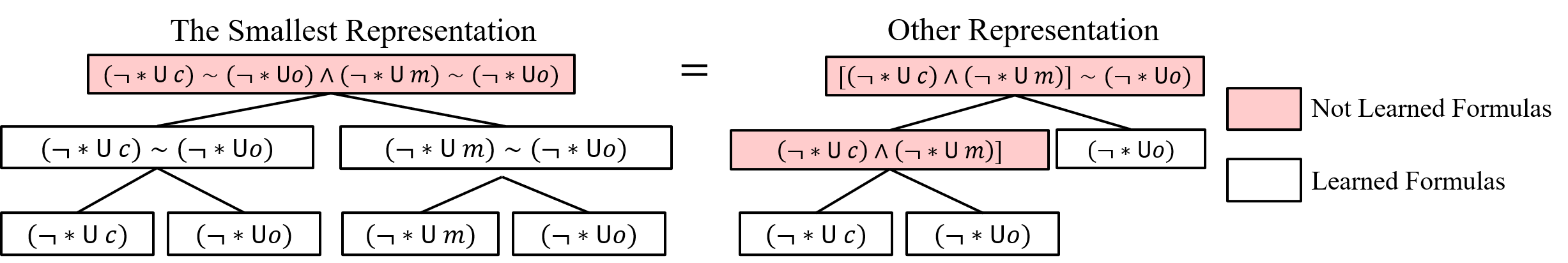}
    \caption{Representations of the task \textit{``delivering coffee and mail to office avoiding furniture''} in the \textsc{OfficeWorld} domain.}
    \label{fig: representation office}
\end{figure*}

\begin{figure*}[h]
    \centering
    \includegraphics[width=\linewidth]{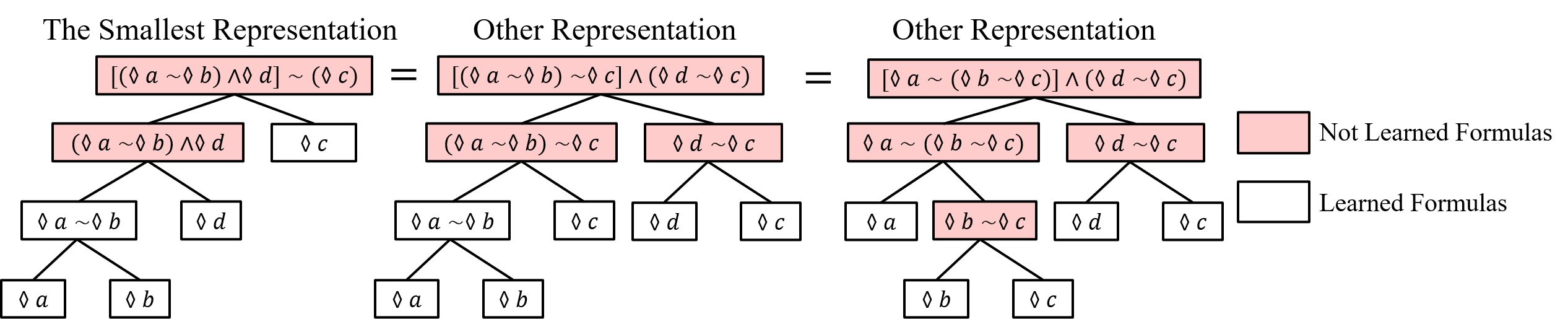}
    \caption{Representations of the task \textit{``making bed''} in the \textsc{MineCraft} domain}
    \label{fig: representation craft}
\end{figure*}

In Section 5.3, the sequences of tasks in each phase in the two domains are listed in Table \ref{tab:tasks in office} and \ref{tab:tasks in craft}, respectively.

\begin{table*}[h]
    \centering
    \begin{tabular}{|c||c|c|}
    \hline
        Phase & Task & The Smallest Representation  \\\hline
        \multirow{2}{*}{1} & \textit{deliver coffee to A avoiding furniture} & $(\neg *\until c)\sim (\neg *\until A)$\\\cline{2-3}
        & \textit{deliver mail to B avoiding furniture} & $(\neg *\until m)\sim (\neg *\until B)$ \\\hline
        \multirow{2}{*}{2} & \textit{deliver coffee to office avoiding furniture} & $(\neg *\until c)\sim (\neg *\until o)$\\\cline{2-3}
        & \textit{deliver mail to office avoiding furniture} & $(\neg *\until m)\sim (\neg *\until o)$ \\\hline
        \multirow{3}{*}{3} & \multirow{2}{*}{\textit{deliver coffee and mail to office avoiding furniture}} & $[(\neg *\until c)\sim (\neg *\until o)]$\\
        && $\land[(\neg *\until m)\sim (\neg *\until o)]$\\\cline{2-3}
        & \textit{go to B then A avoiding furniture} & $(\neg *\until B)\sim (\neg *\until A)$ \\\hline
    \end{tabular}
    \caption{Tasks in the \textsc{OfficeWorld} domain.}
    \label{tab:tasks in office}
\end{table*}

\begin{table*}[h]
    \centering
    \begin{tabular}{|c||c|c|}
    \hline
        Phase & Task & The Smallest Representation \\\hline
        \multirow{2}{*}{1} & \textit{make plank} & $(\Diamond a)\sim (\Diamond b)$\\ \cline{2-3}
        & \textit{make stick} & $(\Diamond a)\sim (\Diamond c)$ \\\hline
        \multirow{2}{*}{2} & \textit{make cloth} & $(\Diamond d)\sim (\Diamond e)$\\\cline{2-3}
        & \textit{make rope} & $(\Diamond d)\sim (\Diamond b)$ \\\hline
        \multirow{2}{*}{3} & \textit{make bridge} & $(\Diamond a \land \Diamond f)\sim (\Diamond e)$\\\cline{2-3}
        & \textit{make bed} & $((\Diamond a\sim\Diamond b) \land \Diamond d))\sim (\Diamond c)$ \\\hline
        \multirow{2}{*}{4} & \textit{make axe} & $((\Diamond a\sim\Diamond c) \land \Diamond f))\sim (\Diamond b)$\\\cline{2-3}
        & \textit{make shears} & $(\Diamond a\land \Diamond f) \sim (\Diamond c)$ \\\hline
        \multirow{2}{*}{5} & \textit{get gold} & $(\Diamond a\land \Diamond f) \sim (\Diamond e)\sim (\Diamond g)$\\\cline{2-3}
        & \textit{get gem} & $((\Diamond a\sim\Diamond c) \land \Diamond f))\sim (\Diamond b)\sim (\Diamond h)$ \\\hline
    \end{tabular}
    \caption{Tasks in the \textsc{MineCraft} domain.}
    \label{tab:tasks in craft}
\end{table*}

\section{Illustrations of the LSRM Processes in the Experiments}

Figure \ref{fig: extend rm 1} and \ref{fig: extend rm 2} give illustrations of LSRM processes in the \textsc{OfficeWorld} and \textsc{MineCraft} domain, respectively. The nodes (rectangles) encoded with SLTL formulas are states in RM, and the black solid arrows are the transitions among them. Formulas colored in white have already been learned before, while formulas colored in red have not been learned. The red dotted arrows represent the Q-functions transferred from the learned formulas to the corresponding target formulas. 
\begin{figure*}[h]
    \centering
    \includegraphics[width=\linewidth]{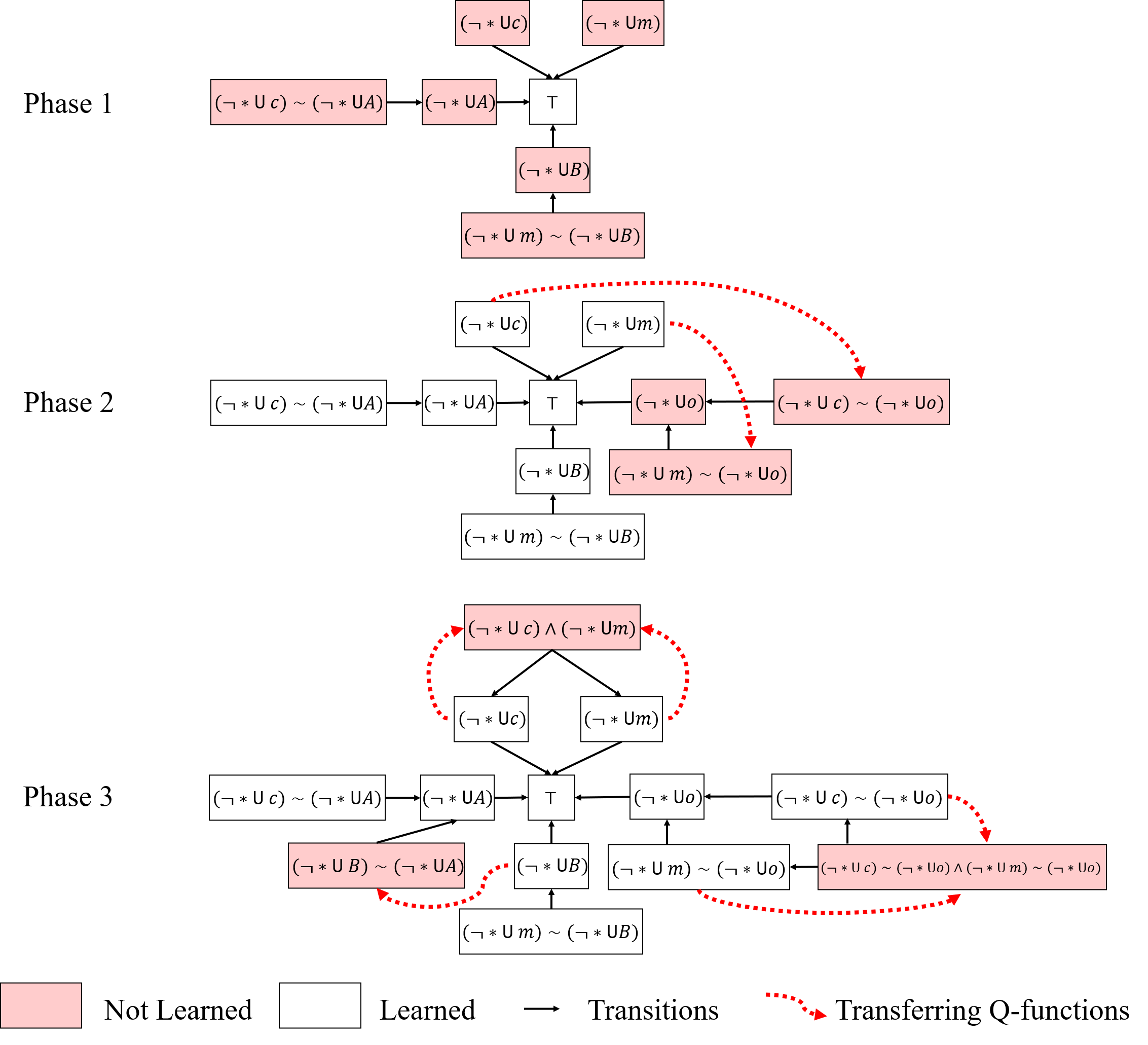}
    \caption{The LSRM process in the \textsc{OfficeWorld} domain. Taking Phase 2 as an example, formulas $(\neg *\until o),(\neg *\until c)\sim (\neg *\until o)$ and $(\neg *\until m)\sim (\neg *\until o)$ have not been learned. Q-functions of $(\neg *\until c)$ and $(\neg *\until m)$ are transferred to $(\neg *\until c)\sim (\neg *\until o)$ and $(\neg *\until m)\sim (\neg *\until o)$, respectively, while there are no learned formulas transferred to $(\neg *\until o)$.}
    \label{fig: extend rm 1}
\end{figure*}

\begin{figure*}[h]
    \centering
    \includegraphics[width=\linewidth]{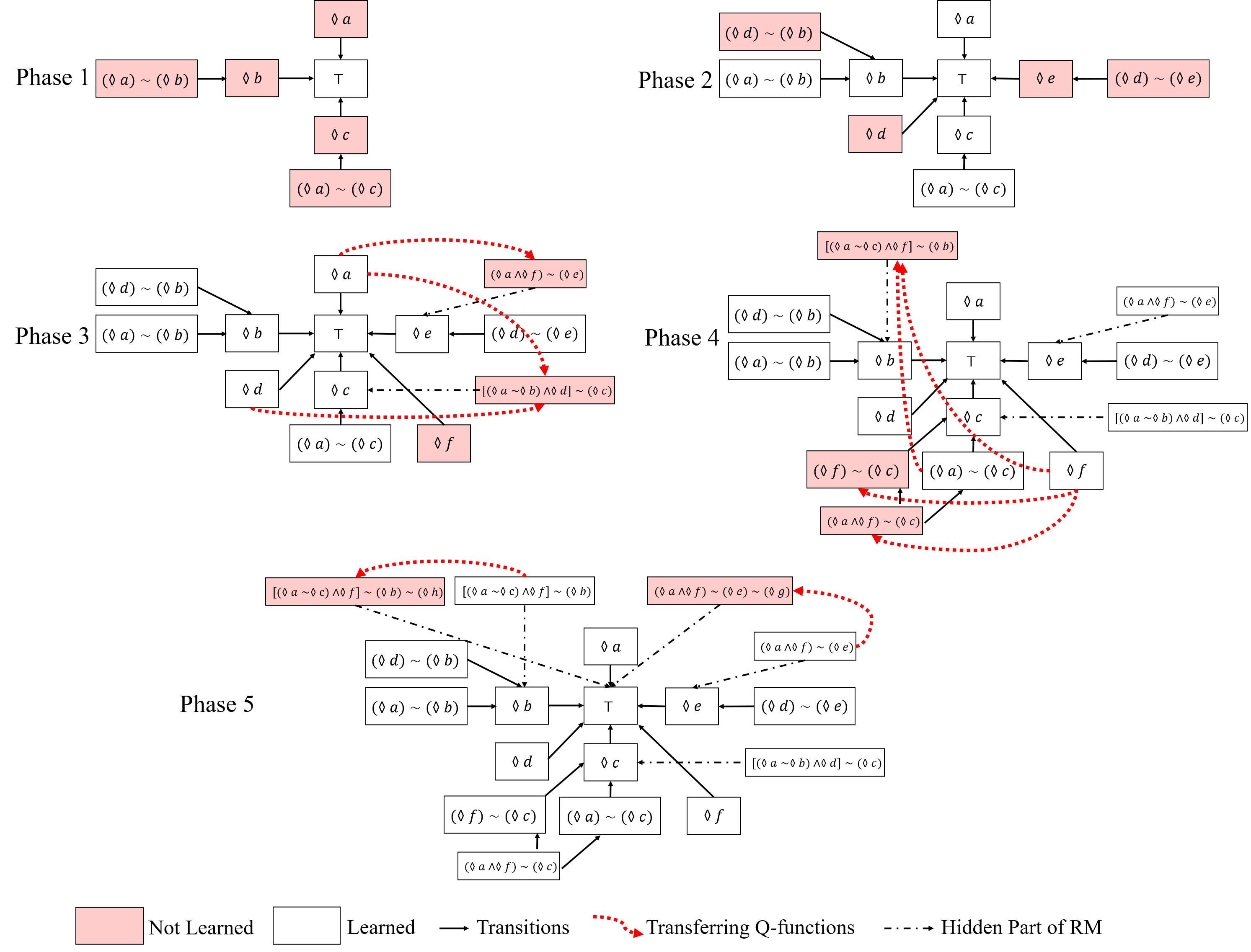}
    \caption{The LSRM process in the \textsc{MineCraft} domain. For the sake of succinctness, we use black dotted arrows to represent a hidden part of RM which includes multiple states and transitions in Phase 4-5.}
    \label{fig: extend rm 2}
\end{figure*}

\end{document}